\newcommand{\E}{\textbf{E}}
\newcommand{\argmax}{\text{argmax}}
\newcommand{\argmin}{\text{argmin}}
\newcommand{\comm}[1]{}
\newenvironment{myfont}{\fontfamily{phv}\selectfont}{\par}
\newtheorem{definition}{Definition}
\newtheorem{theorem}{Theorem}
\newtheorem{lemma}{Lemma}
\begin{document}

\title{Extrapolating Expected Accuracies for Large Multi-Class Problems}


\author{ Charles Zheng\\ {\tt charles.y.zheng@gmail.com} \\
       Rakesh Achanta \\ {\tt rakesha@stanford.edu} \\
       Yuval Benjamini \\ {\tt yuval.benjamini@mail.huji.ac.il}}

\maketitle

\begin{abstract}
The difficulty of multi-class classification generally increases with
the number of classes.  Using data from a subset of the classes, can
we predict how well a classifier will scale with an increased number
of classes?  Under the assumptions that the classes are sampled
identically and independently from a population, and that the classifier is based on independently learned
scoring functions, we show that the expected accuracy when the
classifier is trained on $k$ classes is the $k-1$st moment of a
certain distribution that can be estimated from data.  We present an
unbiased estimation method based on the theory, and demonstrate its
application on a facial recognition example.
\end{abstract}

\section{Introduction}\label{sec:recog_tasks}

Many machine learning tasks are interested in recognizing or
identifying an individual instance within a large set of possible
candidates. These problems are usually modeled as multi-class
classification problems, with a large and possibly complex label
set. Leading examples include detecting the speaker from his voice
patterns \citep{togneri2011overview}, identifying the author from her
written text \citep{stamatatos2014overview}, or labeling the object
category from its image
\citep{duygulu2002object,deng2010does,oquab2014learning}.  In all
these examples, the algorithm observes an input $x$, and uses the
classifier function $h$ to guess the label $y$ from a large label set
$\mathcal{S}$.

There are multiple practical challenges in developing classifiers for large label sets. Collecting high quality training data is
perhaps the main obstacle, as the costs scale with the number of
classes.  It can be affordable to first collect data for a small set
of classes, even if the long-term goal is to generalize to a larger
set.  Furthermore, classifier development can be accelerated by
training first on fewer classes, as each training cycle may require
substantially less resources.  Indeed, due to interest in how
small-set performance generalizes to larger sets, such comparisons can
found in the literature \citep{oquab2014learning, griffin2007caltech}.
A natural question is: how does changing the size of the label set
affect the classification accuracy?

We consider a pair of classification problems on finite
label sets: a source task with label set $\mathcal{S}_{k_1}$ of size
$k_1$, and a target task with a larger label set $\mathcal{S}_{k_2}$
of size $k_2 > k_1$.  For each label set $\mathcal{S}_k$, one
constructs the classification rule $h^{(k)}:\mathcal{X} \to
\mathcal{S}_{k}$.  Supposing that in each task, the test example
$(X^*, Y^*)$ has a joint distribution, define the generalization
accuracy for label set $\mathcal{S}_k$ as
\begin{equation}\label{eq:ga_k}
\text{GA}_k = \Pr[h^{(k)}(X^*) = Y^*].
\end{equation}
The problem of \emph{performance extrapolation} is the following:
using data from only the source task $\mathcal{S}_{k_1}$, 
predict the accuracy for a target task with a larger unobserved 
label set $\mathcal{S}_{k_2}$.

A natural use case for performance extrapolation would be in the
deployment of a facial recognition system.  Suppose a system was
developed in the lab on a database of $k_1$ individuals. Clients would
like to deploy this system on a new larger set of $k_2$
individuals. Performance extrapolation could allow the lab to predict
how well the algorithm will perform on the client's problem,
accounting for the difference in label set size.

Extrapolation should be possible when the source and target
classifications belong to the same problem domain.  In
many cases, the set of categories $\mathcal{S}$ is to some degree a
random or arbitrary selection out of a larger, perhaps infinite, set
of potential categories $\mathcal{Y}$. Yet any specific experiment
uses a fixed finite set.  For example, categories in the classical
Caltech-256 image recognition data set \citep{griffin2007caltech} were
assembled by aggregating keywords proposed by students and then
collecting matching images from the web.  The arbitrary nature of the
label set is even more apparent in biometric applications (face
recognition, authorship, fingerprint identification) where the labels
correspond to human individuals \citep{togneri2011overview,
  stamatatos2014overview}.  In all these cases, the number of the
labels used to define a concrete data set is therefore an experimental
choice rather than a property of the domain.  Despite the arbitrary
nature of these choices, such data sets are viewed as representing the
larger problem of recognition within the given domain, in the sense
that success on such a data set should inform performance on similar
problems.

In this paper, we assume that both $\mathcal{S}_{k_1}$ and
$\mathcal{S}_{k_2}$ are independent identically distributed (i.i.d.)
samples from a population (or prior distribution) of labels $ \pi$, which is
defined on the label space $\mathcal{Y}$.  These assumptions help
concretely analyze the generalization accuracy, although both are only
approximate characterizations of the label selection process, which is
often at least partially manual. Since we assume the label set is
random, the generalization accuracy of a given classifier becomes a
random variable.  Performance extrapolation then becomes the problem
of estimating the average generalization accuracy $\text{AGA}_k$ of an
i.i.d. label set $\mathcal{S}_k$ of size $k$.  The condition of
i.i.d. sampling of labels ensures that the separation of labels in a
random set $\mathcal{S}_{k_2}$ can be inferred by looking at the
empirical separation in $\mathcal{S}_{k_1}$, and therefore that some
estimate of the average accuracy on $\mathcal{S}_{k_2}$ can be
obtained.  We also make the assumption that the classifiers train a
separate model for each class.  This convenient property allows us to
characterize the accuracy of the classifier by selectively
conditioning on one class at a time.

Our paper presents two main contributions related to extrapolation.
First, we present a theoretical formula describing how average accuracy
for smaller $k$ is linked to average accuracy for label set of size
$K>k$.  We show that accuracy at any size depends on a
\emph{discriminability} function ${D}$, which is determined by properties
of the data distribution and the classifier.  Second, we propose an
estimation procedure that allows extrapolation of the observed average
accuracy curve from $k_1$-class data to a larger number of classes,
based on the theoretical formula. Under certain conditions, the
estimation method has the property of being an unbiased estimator of
the average accuracy.

The paper is organized as follows.  In the rest of this section, we
discuss related work.  The framework of randomized classification is
introduced in Section \ref{sec:rc_motivation}, and there we also
introduce a toy example which is revisited throughout the
paper. Section \ref{sec:extrapolation} develops our theory of
extrapolation, and Section \ref{sec:extrapolation_estimation} we
suggest an estimation method. We evaluate our method using simulations in Section 4.
In Section
\ref{sec:extrapolation_example}, we demonstrate our method on a facial
recognition problem, as well as an optical character recognition problem. In Section \ref{sec:discussion} we
discuss modeling choices and limitations of our theory, as well as
potential extensions.

\subsection{Related Work}

Linking performance between two different but related classification
tasks can be considered an instance of transfer learning
\citep{pan2010survey}. Under \citeauthor{pan2010survey}'s terminology,
our setup is an example of multi-task learning, because the source
task has labeled data, which is used to predict performance on a
target task that also has labeled data. Applied examples of transfer learning from one label set to another include \cite{oquab2014learning},
\cite{donahue2014decaf}, \cite{sharif2014cnn}. 
However, there is little theory for predicting the behavior of the learned classifier on a new label set. Instead, most research classification for large label sets deal with the computational challenges of jointly optimizing the many parameters
required for these models for specific classification algorithms \citep{crammer2001algorithmic,
  lee2004multicategory, weston1999support}. \cite{gupta2014training}
presents a method for estimating the accuracy of a classifier which can
be used to improve performance for general classifiers, but doesn't apply for different set sizes.  

The theoretical framework we adopt is one where there exists a family
of classification problems with increasing number of classes. This
framework can be traced back to \cite{Shannon1948}, who considered the error rate of a random codebook, which is a special case of randomized classification. More recently, a number of authors have considered the problem of high-dimensional feature selection for multiclass
classification with a large number of classes \citep{pan2016ultrahigh,
  abramovich2015feature, davis2011bayesian}.  All of these works
assume specific distributional models for classification compared to
our more general setup. However, we do not deal with the problem of
feature selection.

Perhaps the most similar method that deals with extrapolation of classification error to a larger number of classes can be found in \cite{Kay2008a}. They trained a classifier for identifying the observed stimulus from a functional MRI scan of brain activity, and were interested in its performance on larger stimuli sets. They proposed an extrapolation algorithm as a heuristic with little theoretical discussion. In Section \ref{sec:KDEcomparison} we interpret their method within our theory, and discuss cases where it performs well compared to our algorithm.

\section{Randomized Classification}\label{sec:rc_motivation}

The randomized classification model we study has the following
features.  We assume that there exists an infinite, perhaps
continuous, label space $\mathcal{Y}$ and a example space $\mathcal{X}
\in \mathbb{R}^p$.  We assume there exists a prior distribution $\pi$
on the label space $\mathcal{Y}$.  And for each label $y \in
\mathcal{Y}$, there exists a distribution of examples $F_y$. In other
words, for an example-label pair $(X, Y)$, the conditional
distribution of $X$ given $Y = y$ is given by $F_y$.

A random classification task can be generated as follows.  The label
set $\mathcal{S} = \{Y^{(1)},\hdots, Y^{(k)}\}$ is generated by
drawing labels $Y^{(1)},\hdots, Y^{(k)}$ i.i.d. from $\pi$.  For each
label, we sample a training set and a test set.  The training set is
obtained by sampling $r_{train}$ observations $X_{j, train}^{(i)}$
i.i.d. from $F_{Y^{(i)}}$ for $j = 1,\hdots, r_{train}$ and $i =
1,\hdots, k$.  The test set is likewise obtained by sampling $r$
observations $X_j^{(i)}$ i.i.d. from $F_{Y^{(i)}}$ for $j = 1,\hdots,
r$.

We assume that the classifier $h(x)$ works by assigning a score to
each label $y^{(i)} \in \mathcal{S}$, then choosing the label with the
highest score.  That is, there exist real-valued \emph{score
  functions} $m_{y^{(i)}}(x)$ for each label $y^{(i)} \in
\mathcal{S}$.  Since the classifier is allowed to depend on the
training data, it is convenient to view it (and its associated score
functions) as random.  We write $H(x)$ when we wish to work with the
classifier as a random function, and likewise $M_y(x)$ to denote the
score functions whenever they are considered as random.

For a fixed instance of the classification task with labels
$\mathcal{S} = \{y^{(i)}\}_{i=1}^k$ and associated score functions
$\{m_{y^{(i)}}\}_{i=1}^k$, recall the definition of the $k$-class
generalization error \eqref{eq:ga_k}.  Assuming that there are no
ties, it can be written in terms of score functions as
\[
\text{GA}_k(h) = \frac{1}{k} \sum_{i=1}^k  \Pr[m_{y^{(i)}}(X^{(i)}) = \max_j
m_{y^{(j)}}(X^{(i)})],
\]
where $X^{(i)} \sim F_{y^{(i)}}$ for $i =1,\hdots, k$.  However, when
we consider the labels $\{Y^{(i)}\}_{i=1}^k$ and associated score
functions to be random, the generalization accuracy also becomes a
random variable.

Suppose we specify $k$ but do not fix any of the random quantities in
the classification task.  Then the $k$-class \emph{average
  generalization accuracy} of a classifier is the expected value of
the generalization accuracy $\text{GA}_k(H)$ resulting from a random
set of $k$ labels, $Y^{(1)}, \hdots, Y^{(k)} \stackrel{iid}{\sim
  \pi}$, and their associated score functions,
\begin{align*}
\text{AGA}_k &= \frac{1}{k} \sum_{i=1}^k \Pr[M_{Y^{(i)}}(X^{(i)}) = \max_j
M_{Y^{(j)}}(X^{(i)})]
\\&= \Pr[M_{Y^{(1)}}(X^{(1)}) = \max_j M_{Y^{(j)}}(X^{(1)})].
\end{align*}
The last line follows from noting that all $k$ summands in the
previous line are identical.  The definition of average
  generalization accuracy is illustrated in Figure
  \ref{fig:average_risk}.

\begin{figure}[t]
\centering
\includegraphics[scale = 0.3]{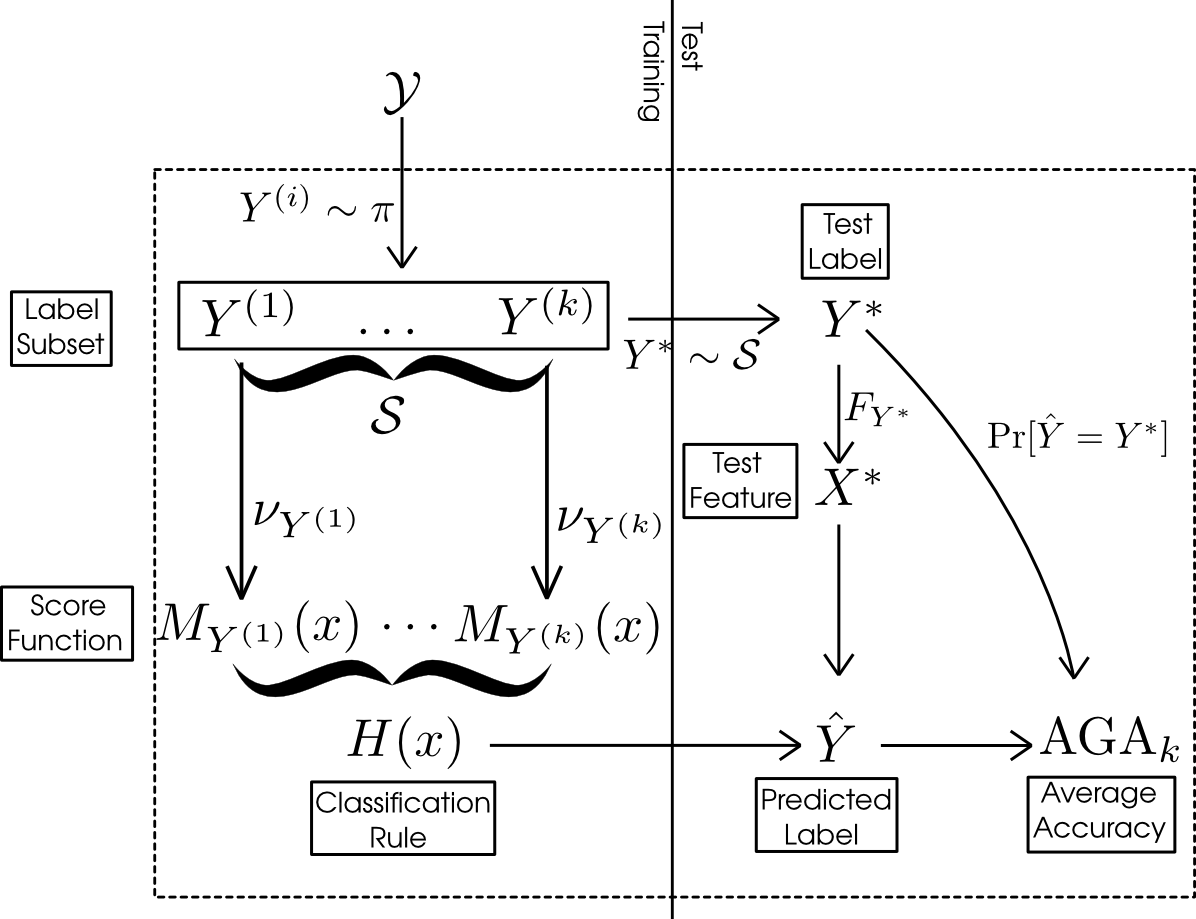}
\caption{\textbf{Average generalization accuracy:} A diagram of the random quantities underlying the average generalization accuracy for $k$ labels ($\text{AGA}_k$). At the training stage (left), a set of $k$ labels $\mathcal{S}$ is sampled from the prior $\pi$, and score functions are trained from examples for these classes. At the test stage (right), one true class $Y^*$ is sampled uniformly from $\mathcal{S}$, as well as a test example $X^*$. $\text{AGA}_k$ measures the expected accuracy over these random variables.}\label{fig:average_risk}
\end{figure}

\subsection{Marginal Classifier}

In our analysis, we do not want the classifier to rely too strongly on
complicated interactions between the labels in the set. We therefore
propose the following property of marginal separability for
classification models:

\begin{definition}
The classifier $H(x)$ is called a \emph{marginal classifier} if the
score function $M_{y^{(i)}}(x)$ only depends on the label $y^{(i)}$
and the class training set $X_{j, train}^{(i)}$; that is, for some function $g$,
\[M_{y^{(i)}}(x) = g(x; y^{(i)},X_{1, train}^{(i)},...,X_{r_{train}, train}^{(i)}).\]
\end{definition}
This means that the score function for $y^{(i)}$ does not depend on
other labels $y^{(j)}$ or their training samples.  Therefore, each
$M_y$ can be considered to have been drawn from a distribution
$\nu_y$.  Classes ``compete'' only through selecting the highest
score, but not in constructing the score functions.  The operation of
a marginal classifier is illustrated in Figure
\ref{fig:classification_rule}.

\begin{figure}[t]
\centering
\includegraphics[scale = 0.4]{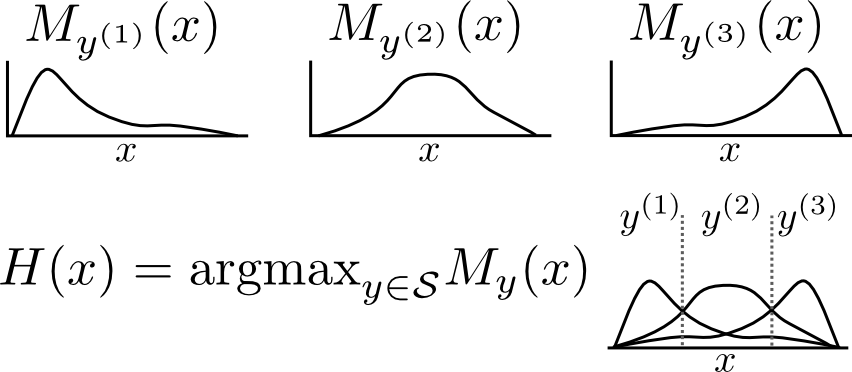}
\caption{\textbf{Classification rule:} Top: Score functions for three classes in a one-dimensional example space. Bottom: The classification rule chooses between $y^{(1)},y^{(2)}$ or $y^{(3)}$ by choosing the maximal score function. 
}
\label{fig:classification_rule}
\end{figure}

The \emph{marginal} property allows us to prove
strong results about the accuracy of the classifier under
i.i.d. sampling assumptions.

\textbf{Comments:}
\begin{enumerate}
\item If $H$ is a marginal classifier then 
$M_{Y^{(i)}}$ is independent of $Y^{(j)}$ and $M_{Y^{(j)}}$ for $i \neq j$.
\item Estimated Bayes classifiers are primary examples of marginal
  classifiers. Let $\hat{f_y}$ be a density estimate of the example
  distribution under label $y$ obtained from the empirical
  distribution $\hat{F_y}$. Then, we can use the estimated density to
  produce the score functions:
\[ M^{EB}_y(x) = \log(\hat{f_{y}}(x)).\]
The resulting empirical approximation for the Bayes classifier would
be
\[ H^{EB}(x) = \text{argmax}_{Y \in \mathcal{S}}(M^{EB}_Y(x)).\]
\item Both Quadratic Discriminant Analysis (QDA) and na\:{i}ve Bayes
  classifiers can be seen as specific instances of an estimated Bayes
  classifier.
\footnote{QDA is the special case of the estimated Bayes classifier
  when $\hat{f_y}$ is obtained as the multivariate Gaussian density
  with mean and covariance parameters estimated from the data.  Naive
  Bayes is the estimated Bayes classifier when $\hat{f_y}$ is obtained
  as the product of estimated componentwise marginal distributions of
  $p(x_i|y)$.}  For QDA, the score function is given by
\[
m_y^{QDA}(x) = -(x - \mu(\hat{F}_y))^T \Sigma(\hat{F}_y)^{-1} (x-\mu(\hat{F}_y)) - \log\det(\Sigma(\hat{F}_y)),
\]
where $\mu(F) = \int y dF(y)$ and $\Sigma(F) = \int (y-\mu(F))(y-\mu(F))^T dF(y)$.
In Naive Bayes, the score function is
\[
m^{NB}_y(x) = \sum_{j=1}^p \log \hat{f}_{y, j}(x),
\]
where $\hat{f}_{y, j}$ is a density estimate for the $j$-th component of
$\hat{F}_y$.
\item For some classifiers, $M_y$ is a deterministic function of $y$
  (and therefore $\nu_y$ is degenerate). A prime example is when
  there exist fixed or pre-trained embeddings $g, \tilde{g}$ that map
  labels $y$ and examples $x$ into $R^p$. Then
\begin{equation}
M_y^{embed} = -\|g(y) - \tilde{g}(x)\|_2.
\end{equation}
\item There are many classifiers which do not satisfy the marginal
  property, such as multinomial logistic regression, multilayer neural
  networks, decision trees, and k-nearest neighbors.
\end{enumerate}

\emph{Notational remark.}  Henceforth, we shall relax the assumption
that the classifier $H(x)$ is based on a training set.  Instead, we
assume that there exist score functions $\{M_{Y^{(i)}}\}_{i=1}^k$
associated with the random label set $\{Y^{(i)}\}_{i=1}^k$, and that
the score functions $M_{Y^{(i)}}$ are independent of the test set.
The classifier $H(x)$ is marginal if and only if $M_{Y^{(i)}}$ are
independent of both $Y^{(j)}$ and $M_{Y^{(j)}}$ for $j \neq i$.

\subsection{Estimation of Average Accuracy}\label{sec:estimation_average_accuracy}
Before tackling extrapolation, it is useful to discuss a simpler task of generalizing accuracy results when the target set is \emph{not} larger than the source set. Suppose we have test data for a classification task with $k_1$
classes.  That is, we have a label set $\mathcal{S}_{k_1} =
\{y^{(i)}\}_{i=1}^{k_1}$ and its associated set of score functions
$M_{y^{(i)}}$, as well as test observations $(x_1^{(i)},\hdots,
x_{r}^{(i)})$ for $i = 1,\hdots, k_1$.  What would be the predicted
accuracy for a new randomly sampled set of $k_2 \leq k_1$ labels?

Note that $\text{AGA}_{k_2}$ is the expected value of the accuracy on
the new set of $k_2$ labels.  Therefore, any unbiased estimator of
$\text{AGA}_{k_2}$ will be an unbiased predictor for the accuracy on
the new set.

Let us start with the case $k_2 = k_1 = k$.  For each test observation
$x_j^{(i)}$, define the ranks of the candidate classes $\ell =
1,\hdots, k$ by
\[
R_{j}^{i, \ell} = \sum_{s = 1}^k I\{m_{y^{(\ell)}}(x_j^{(i)}) \geq m_{y^{(s)}}(x_j^{(i)})\}.
\]
The test accuracy is the fraction of observations for which the
correct class also has the highest rank
\begin{equation}\label{eq:test_risk}
\text{TA}_k = \frac{1}{r k} \sum_{i=1}^{k} \sum_{j=1}^{r} I\{R_j^{i,i} = k\}.
\end{equation}
Taking expectations over both the test set and the random labels, the
expected value of the test accuracy is $\text{AGA}_k$.  Therefore, in this special case, $\text{TA}_k$ provides an unbiased estimator for $\text{AGA}_{k_2}$.

Next, let us consider the case where $k_2 < k_1$.  Consider label set
$\mathcal{S}_{k_2}$ obtained by sampling $k_2$ labels uniformly
without replacement from $\mathcal{S}_{k_1}$. Since
$\mathcal{S}_{k_2}$ is unconditionally an i.i.d. sample from the
population of labels $\pi$, the test accuracy of $\mathcal{S}_{k_2}$
is an unbiased estimator of $\text{AGA}_{k_2}$.  However, we can get a
better unbiased estimate of $\text{AGA}_{k_2}$ by averaging over all
the possible subsamples $\mathcal{S}_{k_2} \subset \mathcal{S}_{k_1}$.
This defines the average test accuracy over subsampled tasks,
$\text{ATA}_{k_2}$.

\emph{Remark.}  Na\"{i}vely, computing $\text{ATA}_{k_2}$ requires us
to train and evaluate ${k_1}\choose{k_2}$ classification rules.
However, for marginal classifiers, retraining the classifier is not
necessary.  Looking at the rank $R_{j}^{i,i}$ of the correct label $i$
for $x_j^{(i)}$, allows us to determine how many subsets
$\mathcal{S}_2$ will result in a correct classification. Specifically,
there are $R_{j}^{i,i} - 1$ labels with a lower score than the correct
label $i$.  Therefore, as long as one of the classes in
$\mathcal{S}_2$ is $i$, and the other $k_2-1$ labels are from the set
of $R_{j}^{i,i}-1$ labels with lower score than $i$, the
classification of $x_j^{(i)}$ will be correct.  This implies that
there are ${R_{j}^{i,i}-1}\choose{k_2-1}$ such subsets $\mathcal{S}_2$
where $x_j^{(i)}$ is classified correctly, and therefore the average
test accuracy for all ${k_1}\choose{k_2}$ subsets $\mathcal{S}_2$ is
\begin{equation}\label{eq:avtestrisk}
\text{ATA}_{k_2} = \frac{1}{{{k_1}\choose{k_2}}}\frac{1}{r k_2} \sum_{i=1}^{k_1} \sum_{j=1}^{r} {{R_{j}^{i,i}-1}\choose{k_2-1}}.
\end{equation}

\subsection{Toy Example: Bivariate Normal}
\label{sec:toyExA}

Let us illustrate these ideas using a toy example.  Let $(Y, X)$ have
a bivariate normal joint distribution,
\[
(Y, X) \sim N\left(\begin{pmatrix}0 \\0\end{pmatrix}, \begin{pmatrix}1 & \rho \\ \rho & 1\end{pmatrix}\right),
\]
as illustrated in Figure \ref{fig:toy1}(a).  Therefore, for a given
randomly drawn label $Y$, the conditional
distribution of $X$ for that label is univariate normal with mean $\rho Y$ and variance $1-\rho^2$,
\[
X|Y = y \sim N(\rho Y, 1-\rho^2).
\]
Supposing we draw $k = 3$ labels $\{y_1,y_2, y_3\}$, the classification
problem will be to assign a test instance $X^*$ to the correct label.
The test instance $X^*$ would be drawn with equal probability from one
of three conditional distributions $ X | Y=y^{(i)}$, as illustrated in
Figure \ref{fig:toy1}(b, top).  The Bayes rule assigns $X^*$ to the
class with the highest density $p(x|y_i)$, as illustrated by Figure
\ref{fig:toy1}(b, bottom): it is therefore a marginal classifier, with
score function
\[
M_{y^{(i)}}(x) = \log(p(x|y^{(i)})) = -\frac{(x - \rho y)^2}{2(1-\rho^2)}  + \text{const.}
\]

\begin{figure}[p]
\centering
\begin{tabular}{cc}
\begin{myfont}Joint distribution of $(X, Y)$\end{myfont} & 
\begin{myfont}Problem instance with $k = 3$\end{myfont}\\
\multirow{3}{*}{\includegraphics[scale = 0.5, clip = true, trim = 0 0 0 0.5in]{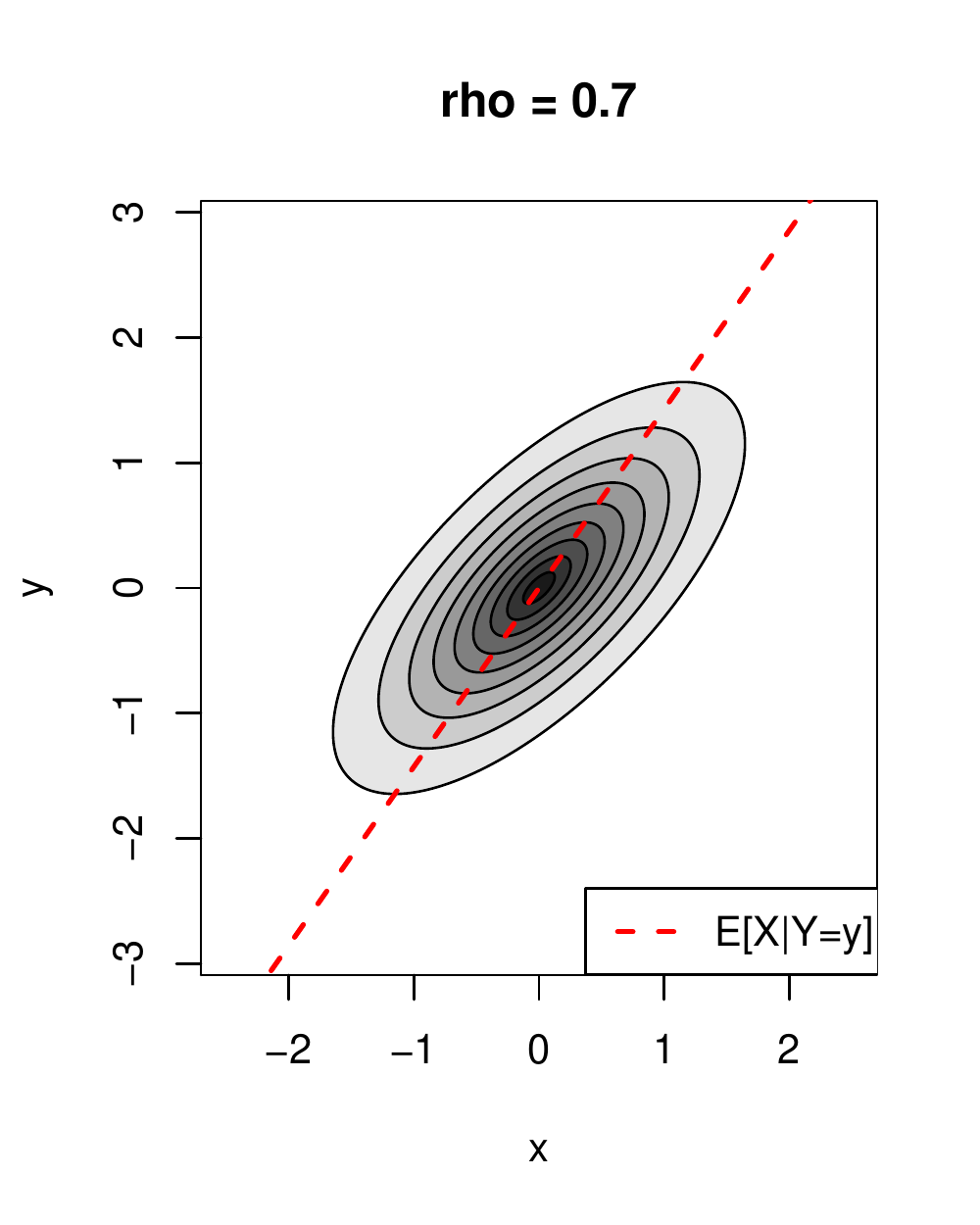}} & \\
& \includegraphics[scale = 0.5, clip = true, trim = 0 0.8in 0 0.8in]{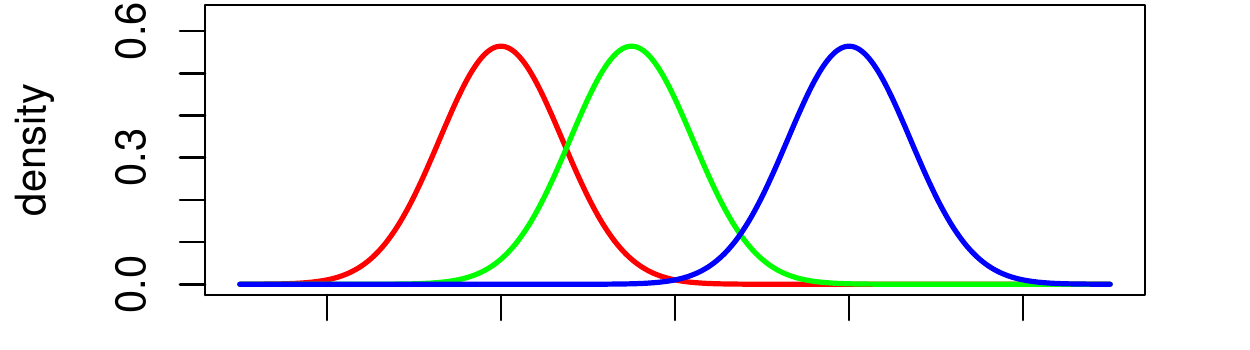}\\
 &  \includegraphics[scale = 0.5, clip = true, trim = 0 0 0 0.5in]{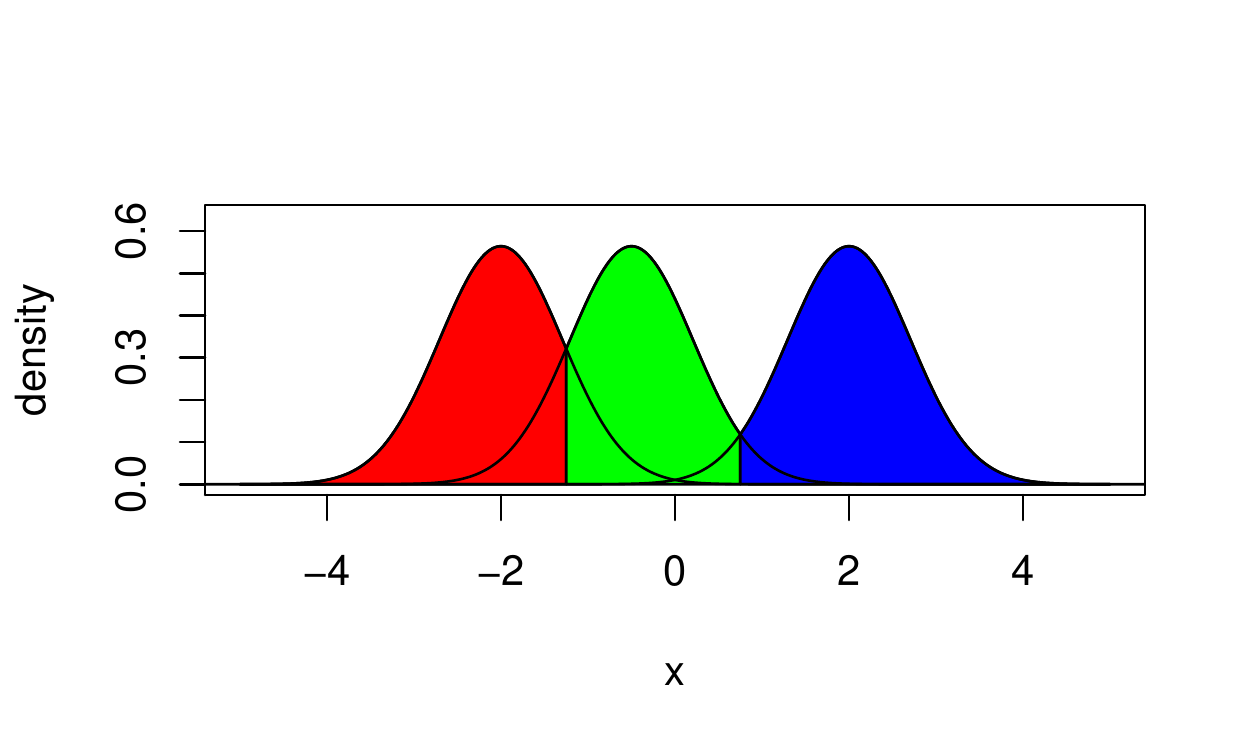}\\
(a) & (b)
\end{tabular}

\caption{\textbf{Toy example:}
\emph{Left:} The joint distribution of $(X, Y)$ is bivariate normal with correlation $\rho = 0.7$.
\emph{Right:} A typical classification problem instance from the bivariate normal model with $k = 3$ classes.
\emph{(Top):} the conditional density of $X$ given label $Y$, for $Y = \{y^{(1)}, y^{(2)}, y^{(3)}\}$.
\emph{(Bottom):} the Bayes classification regions for the three classes.}\label{fig:toy1}
\end{figure}

\begin{figure}[p]
\centering
\includegraphics[scale = 0.7, clip = true, trim = 0 0 0 0.5in]{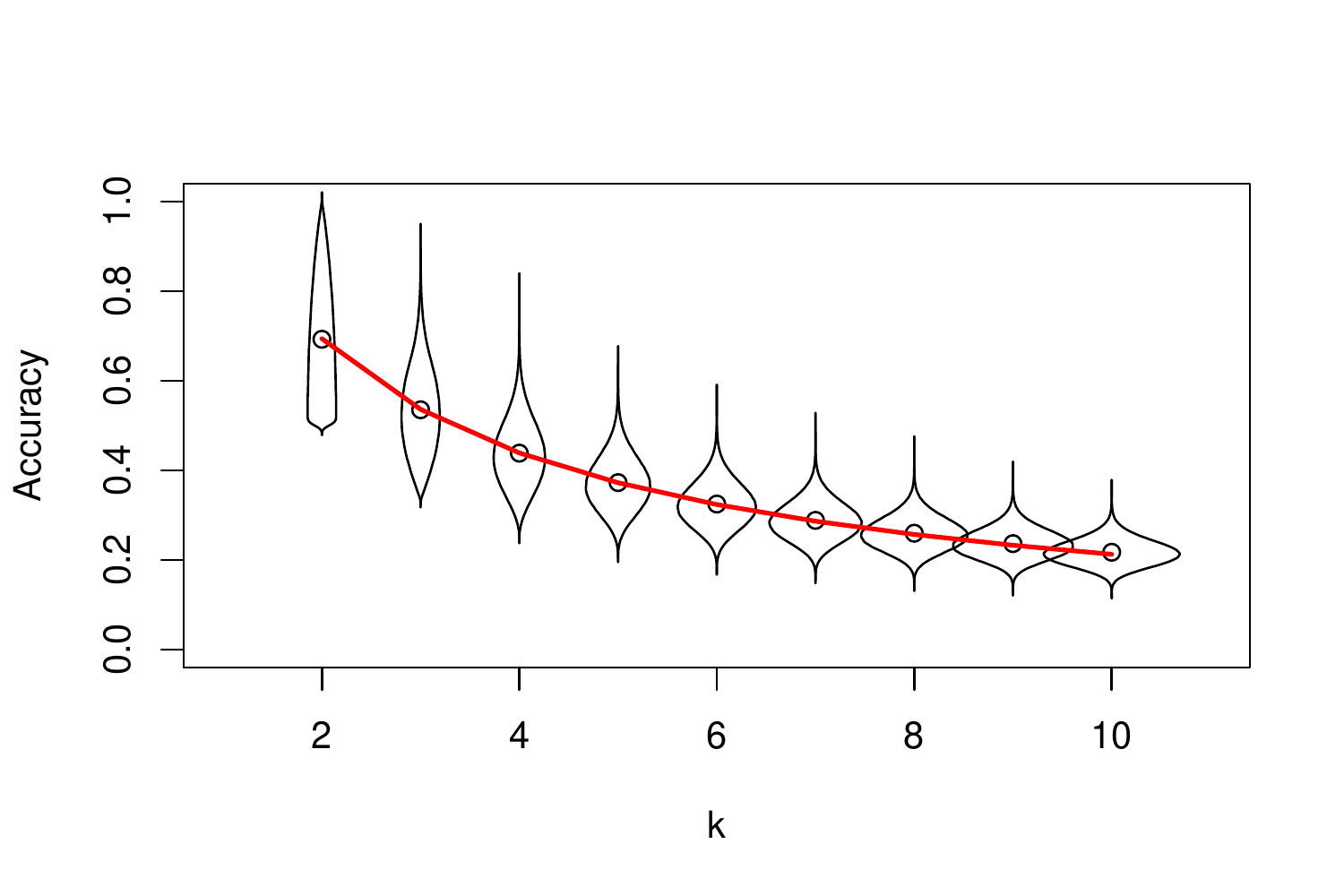}

\caption{\textbf{Generalization accuracy for toy example:} The distribution of the generalization accuracy
  for $k = 2,3,\hdots, 10$ for the bivariate normal model with $\rho =
  0.7$.  Circles indicate the average generalization accuracy $\text{AGA}_k$; the red
  curve is the theoretically computed average accuracy.}\label{fig:toy2}
\end{figure}
\section{Extrapolation}
\label{sec:extrapolation}

For this model, the generalization accuracy of the Bayes rule for any
label set $\{y^{(1)},\hdots, y^{(k)}\}$ is given by
\begin{align*}
\text{GA}_k(y_1,\hdots, y_k) &= \frac{1}{k}\sum_{i=1}^k \Pr_{X \sim p(x|y_i)}[p(X|y_i) = \max_{j=1}^k p(X|y_j)]
\\&= \frac{1}{k}\sum_{i=1}^k \Phi\left(\frac{y^{[i+1]} - y^{[i]}}{2\sqrt{1-\rho^2}}\right) - \Phi\left(\frac{y^{[i-1]} - y^{[i]}}{2\sqrt{1-\rho^2}}\right),
\end{align*}
where $\Phi$ is the standard normal cdf, $y^{[1]} < \cdots < y^{[k]}$
are the sorted labels, $y^{[0]} = -\infty$ and $y^{[k+1]} =
\infty$.  We numerically computed $\text{GA}_k(Y_1,\hdots, Y_k)$ for
randomly drawn labels $Y_1,\hdots, Y_k \stackrel{iid}{\sim} N(0, 1)$, and 
the distributions of $\text{GA}_k$ for $k = 2,\hdots, 10$ are
illustrated in Figure \ref{fig:toy2}.  The mean of the distribution of
$\text{GA}_k$ is the $k$-class average accuracy, $\text{AGA}_k$. The
theory presented in the next section deals with how to analyze the
average accuracy $\text{AGA}_k$ as a function of $k$.

The section is organized as follows.  We begin by introducing an
explicit formula for the average accuracy $\text{AGA}_{k}$.  The
formula reveals that $\text{AGA}_{k}$ is determined by moments of a
one-dimensional function ${D}(u)$.  Using this formula, we can estimate 
${D}(u)$ using subsampled accuracies.  These estimates allow us to extrapolate the average
generalization accuracy to an arbitrary number of labels.

The result of our analysis is to expose the average accuracy
$\text{AGA}_{k}$ as the weighted average of a function ${D}(u)$,
where ${D}(u)$ is independent of $k$, and where $k$ only changes
the weighting.  The result is stated as follows.

\begin{theorem}\label{theorem:avrisk_identity}
Suppose $\pi$, $\{F_y\}_{y \in \mathcal{Y}}$, and score functions $M_y$
satisfy the tie-breaking condition.  Then, there exists a cumulative
distribution function ${D}(u)$ defined on the interval $[0,1]$
such that
\begin{equation}\label{eq:avrisk_identity}
\text{AGA}_{k} = 1 - (k-1) \int {D}(u) u^{k-2} du.
\end{equation}
\end{theorem}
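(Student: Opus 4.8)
The plan is to expose the multiplicative structure hidden in $\text{AGA}_k$ by conditioning on everything associated with the correct class and then treating the $k-1$ competing classes as i.i.d. draws. Starting from
\[
\text{AGA}_k = \Pr[M_{Y^{(1)}}(X^{(1)}) = \max_j M_{Y^{(j)}}(X^{(1)})],
\]
I would condition on the triple $(Y^{(1)}, M_{Y^{(1)}}, X^{(1)})$ consisting of the correct label, its (random) score function, and the test point drawn from $F_{Y^{(1)}}$. Given this triple, the scalar $t := M_{Y^{(1)}}(X^{(1)})$ is a fixed number, and correct classification is, up to ties, the event $\bigcap_{j=2}^k \{ M_{Y^{(j)}}(X^{(1)}) \le t\}$ that every competitor scores below $t$.

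The crucial step is to invoke the marginal property. By the \emph{Notational remark}, each $M_{Y^{(j)}}$ is independent of $Y^{(1)}, M_{Y^{(1)}}$ and of the other competitors, and the pairs $(Y^{(j)}, M_{Y^{(j)}})$ are i.i.d. across $j$. Hence, conditional on the correct-class triple, the events $\{M_{Y^{(j)}}(X^{(1)}) \le t\}$ for $j = 2,\dots,k$ are conditionally i.i.d., each with common probability
\[
U := \Pr[M_{Y'}(X^{(1)}) \le M_{Y^{(1)}}(X^{(1)}) \mid Y^{(1)}, M_{Y^{(1)}}, X^{(1)}],
\]
where $(Y', M_{Y'})$ is a single generic competing class. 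The conditional probability of correct classification is therefore exactly $U^{k-1}$, and taking expectation over the correct-class triple yields the clean identity $\text{AGA}_k = \E[U^{k-1}]$. The tie-breaking condition is precisely what lets me replace the one-sided comparisons by the argmax event (ties have probability zero) and guarantees $U$ is well defined.

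It remains to convert this moment into the stated integral. Since $U$ is a probability it takes values in $[0,1]$; let $D$ be its cumulative distribution function, a genuine CDF on $[0,1]$ with $D(1)=1$. Then $\text{AGA}_k = \E[U^{k-1}] = \int_0^1 u^{k-1}\, dD(u)$, and integrating by parts,
\[
\int_0^1 u^{k-1}\, dD(u) = \bigl[ u^{k-1} D(u)\bigr]_0^1 - (k-1)\int_0^1 u^{k-2} D(u)\, du = 1 - (k-1)\int_0^1 D(u)\, u^{k-2}\, du,
\]
which is exactly \eqref{eq:avrisk_identity}. Note that this $D$ does not depend on $k$; only the weighting factor $u^{k-2}$ changes, as claimed in the discussion preceding the theorem.

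The routine parts are the integration by parts and the observation that $U \in [0,1]$ makes $D$ a proper CDF (so the boundary term is $D(1)=1$). The main obstacle — indeed the only place an assumption really does work — is justifying the product form $U^{k-1}$: this rests entirely on the marginal property supplying conditional independence of the competitors given the correct class, together with the tie-breaking condition identifying the argmax event with the intersection of one-sided comparisons. Once that conditional i.i.d. structure is secured, the remainder is bookkeeping.
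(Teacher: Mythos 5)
Your proof is correct and follows essentially the same route as the paper's: both condition on the correct-class triple $(Y^*, M_{Y^*}, X^*)$, use the marginal/i.i.d.\ structure of the competitors to reduce the conditional success probability to $U^{k-1}$ with $U$ the correct-label favorability, and then convert $\E[U^{k-1}]$ into the stated integral against $D$. The only cosmetic differences are that the paper packages the conditional independence through a lemma on the uniformity of the incorrect-label favorability (giving a $\mathrm{Beta}(k-1,1)$ law for the max) and converts the moment via a layer-cake/Fubini identity, whereas you argue the product form directly and finish with Stieltjes integration by parts --- both of which are equivalent bookkeeping.
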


The tie-breaking allows us
to neglect specifying the case when
margins are tied.
\begin{definition}
\emph{Tie-breaking condition}: for all $x \in \mathcal{X}$,
$M_Y(x) \neq M_{Y'}(x)$
with probability one for $Y, Y'$ independently drawn from $\pi$.
\end{definition}
In practice, one can simply break ties randomly,
which is mathematically equivalent to adding a small amount of random
noise $\epsilon$ to the function $\mathcal{M}$.

\subsection{Analysis of Average Accuracy}

For the following discussion, we often consider a random label with
its associated score function and example vector. Explicitly, this
sampling can be written:
\[Y \sim \pi,\, M_{Y}|Y \sim \nu_{Y},\, X|Y \sim F_{Y}. \]
Similarly we use $(Y',M_{Y'},X')$ and $(Y^*,M_{Y^*},X^*)$ for two more
triplets with independent and identical distributions. Specifically,
$X^*$ will typically note the test example, and therefore $Y^*$ the
true label and $M_{Y^*}$ its score function.

The function ${D}$ is related to a favorability
function. Favorability measures the probability that the score for the
example $x^*$ is going to be maximized by a particular score function $m_y$,
compared to a random competitor $M_{Y'}$.  Formally, we write
\begin{equation}\label{eq:U_function}
U_{x^*}(m_{y}) = \Pr[m_{y}(x^*) > M_{Y'}(x^*)].
\end{equation}

Note that for fixed example $x^*$, favorability is monotonically
increasing in $m_{y}(x^*)$.  If $m_y(x^*) > m_{y^\dagger}(x^*)$, then
$U_{x^*}(y) > U_{x^*}(y^\dagger)$, because the event $\{m_{y}(x^*) >
M_{Y'}(x^*)\}$ contains the event $\{m_{y^\dagger}(x^*) >
M_{Y'}(x^*)\}$.

Therefore, given labels $y^{(1)},\hdots,y^{(k)}$ and test instance
$x^*$, we can think of the classifier as choosing the label with the
greatest favorability:
\[
\hat{y} = \argmax_{y^{(i)} \in \mathcal{S}} m_{y^{(i)}}(x^*) = \argmax_{y^{(i)} \in \mathcal{S}} U_{x^*}(m_{y^{(i)}}).
\]
Furthermore, via a conditioning argument, we see that this is still
the case even when the test instance and labels are random:
\[
\hat{Y} = \argmax_{Y^{(i)} \in \mathcal{S}} M_{Y^{(i)}}(X^*) = \argmax_{Y^{(i)} \in \mathcal{S}} U_{X^*}(M_{Y^{(i)}}).
\]

The favorability takes values between 0 and 1, and when any of its
arguments are random, it becomes a random variable with a distribution
supported on $[0,1]$.  In particular, we consider the following two
random variables:
\begin{itemize}
\item[a.] the \emph{incorrect-label} favorability $U_{x^*}(M_Y)$
  between a given fixed test instance $x^*$, and the score function of
  a random incorrect label $M_{Y}$, and
\item[b.] the \emph{correct-label} favorability $U_{X^*}(M_{Y^*})$
  between a random test instance $X^*$, and the score function of the
  correct label, $M_{Y^*}$.
\end{itemize}
\subsubsection{Incorrect-Label Favorability}
The incorrect-label favorability can be written explicitly as
\begin{equation}
U_{x^*}(M_Y) = \Pr[M_{Y}(x^*) > M_{Y'}(x^*)|M_{Y}].
\end{equation}
Note that $M_Y$ and $M_{Y'}$ are identically distributed, and are both
are unrelated to $x^*$ that is fixed. This leads to the following
result:
\begin{lemma}\label{lemma:U_function}
Under the tie-breaking condition, the incorrect-label favorability
$U_{x^*}(M_Y)$ is uniformly distributed for any $x^* \in \mathcal{X}$,
meaning \begin{equation}\label{eq:Uniform} \Pr[U_{x^*}(M_Y) \leq u] = u
\end{equation}
for all $u \in [0,1].$
\end{lemma}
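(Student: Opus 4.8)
The plan is to reduce the statement to the classical probability integral transform, after collapsing all the randomness into a single real-valued random variable. First I would fix the test instance $x^*$ and set $W = M_Y(x^*)$, where $Y \sim \pi$ and $M_Y \mid Y \sim \nu_Y$, and let $W' = M_{Y'}(x^*)$ be an independent copy generated from the independent triplet $(Y', M_{Y'}, X')$. Since $x^*$ is held fixed, $W$ and $W'$ are genuine i.i.d.\ real-valued random variables with a common cumulative distribution function $F(w) = \Pr[W \le w]$. In this notation the incorrect-label favorability is $U_{x^*}(M_Y) = \Pr[M_Y(x^*) > M_{Y'}(x^*) \mid M_Y] = \Pr[W' < W \mid W]$, which, conditioning on the value $W = w$ and using that $W'$ is independent of $M_Y$, equals $\Pr[W' < w] = F(w^-)$, the left-hand limit of $F$ at $w$.

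Next I would invoke the tie-breaking condition to upgrade $F$ from a general CDF to a continuous one. The condition gives $\Pr[M_Y(x^*) = M_{Y'}(x^*)] = 0$, i.e.\ $\Pr[W = W'] = 0$. If $F$ had an atom of mass $p > 0$ at some point $a$, then by independence $\Pr[W = W'] \ge \Pr[W = a]\,\Pr[W' = a] = p^2 > 0$, a contradiction. Hence $F$ has no atoms and is continuous, so $F(w^-) = F(w)$ and therefore $U_{x^*}(M_Y) = F(W)$ almost surely.

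It then remains to show $F(W) \sim \text{Uniform}[0,1]$, which is exactly the probability integral transform for a continuous CDF. For completeness I would record the short argument: fix $u \in (0,1)$ and set $q = \sup\{w : F(w) \le u\}$; right-continuity gives $F(q) \le u$ and continuity gives $F(q) \ge u$, so $F(q) = u$, while monotonicity yields the exact identity $\{F(W) \le u\} = \{W \le q\}$. Hence $\Pr[F(W) \le u] = F(q) = u$, and together with the boundary values at $u \in \{0,1\}$ this gives $\Pr[U_{x^*}(M_Y) \le u] = u$ for all $u \in [0,1]$, as claimed.

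The computation is largely routine; the one place that genuinely uses the hypotheses is the second step, where the tie-breaking condition is precisely what rules out atoms in $F$. Without it, $U_{x^*}(M_Y) = F(W^-)$ could be supported on a discrete set and would fail to be uniform, so I expect verifying the atomlessness of $F$ (and correctly passing from the left-limit $F(w^-)$ to $F(w)$) to be the only subtle point.
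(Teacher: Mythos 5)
Your proof is correct and follows the same route as the paper: reduce to the i.i.d.\ real random variables $W = M_Y(x^*)$ and $W' = M_{Y'}(x^*)$, use the tie-breaking condition to rule out atoms, and conclude by the probability integral transform. The paper simply asserts the final fact (that $\Pr[Z > Z' \mid Z]$ is uniform for i.i.d.\ atomless $Z, Z'$) without proof, so your write-up just supplies the details it omits; the only nitpick is that your attribution is swapped in the last step ($F(q) \le u$ comes from left-continuity and $F(q) \ge u$ from right-continuity), which is harmless since $F$ is continuous.
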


\begin{proof} Write $U_{x^*}(M_Y) = \Pr[Z > Z'|Z]$, where $Z = M_Y(x)$ and $Z' =
M_{Y'}(x)$ for $Y, Y' \stackrel{i.i.d.}{\sim} \pi$.
The tie-breaking condition implies that $\Pr[Z=Z']=0$.  Now observe that for independent random variables $Z, Z'$ with $Z \stackrel{D}{=} Z'$ and $\Pr[Z=Z']=0$, the conditional probability
$\Pr[Z > Z'|Z]$ is uniformly distributed.
\end{proof}


\subsubsection{Correct-Label Favorability}

The correct-label favorability is 
\begin{equation}
U^* = U_{X^*}(M_{Y^*}) = \Pr[M_{Y^*}(X^*) > M_{Y'}(X^*)|Y^*,M_{Y^*},X^*].
\end{equation}
The distribution of $U^*$ will depend on $\pi$, $\{F_y\}_{y \in \mathcal{S}}$ and $\{\nu_y\}_{y \in \mathcal{S}}$, and
generally cannot be written in a closed form.  However, this
distribution is central to our analysis--indeed, we will see that the
function ${D}$ appearing in theorem \ref{theorem:avrisk_identity}
is defined as the cumulative distribution function of $U^*$.

The special case of $k=2$ shows the relation between the distribution
of $U^*$ and the average generalization accuracy, $\text{AGA}_2$. In
the two-class case, the average generalization accuracy is the
probability that a random correct label score function gives a larger
value than a random distractor:
\[
\text{AGA}_2 = \Pr[M_{Y^*}(X^*) > M_{Y'}(X^*)].
\]
where $Y^*$ is the correct label, and $Y'$ is a random incorrect
label.  If we condition on $Y^*$, $M_{Y^*}$ and $X^*$, we get
\[
\text{AGA}_2 = \E[\Pr[M_{Y^*}(X^*) > M_{Y'}(X^*)|Y^*, M_{Y^*}, X^*]].
\]
Here, the conditional probability inside the expectation is the
correct-label favorability.  Therefore,
\[
\text{AGA}_2 = \E[U^*] = \int {D}(u) du,
\]
where ${D}(u)$ is the cumulative distribution function of $U^*$,
${D}(u) = \Pr[U^* \leq u]$.  Theorem \ref{theorem:avrisk_identity}
extends this to general $k$; we now give the proof.\newline


\begin{proof} Without loss of generality, suppose that the true label is $Y^*$ and
the incorrect labels are $Y^{(1)},\hdots, Y^{(k-1)}$.  We have
\[
\text{AGA}_k = \Pr[M_{Y^*}(X^*) > \max_{i=1}^{k-1} M_{Y^{(i)}}(X^*)]
= \Pr[U^* > \max_{i=1}^{k-1} U_{X^*}(M_{Y^{(i)}})],
\]
recalling that $U^* = U_{X^*}(M_{Y^*})$.  Now, if we condition on $X^*
= x^*$, $Y^* = y^*$ and $M_{Y^*} = m_{y^*}$, then the random variable
$U^*$ becomes fixed, with value
\[
u^* = U_{x^*}(m_{y^*}).
\]
Therefore,
\begin{align*}
\text{AGA}_k &=\E[\Pr[U^* > \max_{i=1}^{k-1} U_{X^*}(M_{Y^{(i)}})|X^* = x^*, Y^* = y^*, M_{Y^*} = m_{y^*}]]
\\&= \E[\Pr[U^* > \max_{i=1}^{k-1} U_{X^*}(M_{Y^{(i)}})|X^* = x^*, U^* = u^*]].
\end{align*}
Now define $U_{max, k-1} = \max_{i=1}^{k-1} U_{X^*}(M_{Y^{(i)}})$. 
Since by Lemma \ref{lemma:U_function},
$U_{X^*}(M_{Y^{(i)}})$ are i.i.d. uniform conditional on $X^* = x^*$, we know that
\begin{equation}\label{eq:umax_beta}
U_{max, k-1}|X^* = x^* \sim \text{Beta}(k-1, 1). 
\end{equation}
Furthermore, $U_{max, k-1}$ is independent of $U^*$ conditional on
$X^*$.  Therefore, the conditional probability can be computed as
\[
\Pr[U^* > U_{max, k-1}|X^* = x^*, U^* = u^*] = \int_{u^*}^1 (k-1) u^{k-2} du.
\]
Consequently,
\begin{align}
\text{AGA}_k &= \E[\Pr[U^* > \max_{i=1}^{k-1} U_{X^*}(M_{Y^{(i)}})|X^* = x^*, U^* = u^*]]
\\&= \E[\int_0^{U^*} (k-1) u^{k-2} du|U^* = u^*]
\\&= \E[\int_0^1 I\{u \leq U^*\} (k-1) u^{k-2} du ]
\\&= (k-1) \int_0^1 \Pr[U^* \geq u] u^{k-2} du
\\&= 1 - (k-1) \int_0^1 \Pr[U^* \leq u] u^{k-2} du. \label{eq:lala}
\end{align}
By defining ${D}(u)$ as the cumulative distribution function of
$U^*$ on $[0,1]$,
\begin{equation}\label{eq:Kbar}
{D}(u) = \Pr[U_{X^*}(M_{Y^*}) \leq u],
\end{equation}
and substituting this definition into \eqref{eq:lala}, we obtain the identity \eqref{eq:avrisk_identity}.
\end{proof}

Theorem \ref{theorem:avrisk_identity} expresses the average accuracy
as a weighted integral of the function ${D}(u)$.  Essentially, this theoretical result allows us
to reduce the problem of estimating $\text{AGA}_k$ to one of estimating $D(u)$.
But how shall we estimate $D(u)$ from data?
We propose using non-parametric regression for this purpose in Section \ref{sec:extrapolation_estimation}.

\subsection{Favorability and Average Accuracy for the Toy Example}

Recall that for the toy example from Section \ref{sec:toyExA}, the
score function $M_{y}$ was a non-random function of $y$ that measures
the distance between $x$ and $\rho y$
\[
M_{y}(x^*) = \log(p(x^*|y)) = -\frac{(x^* - \rho y)^2}{2(1-\rho^2)} .
\]

For this model, the favorability function $U_{x^*}(m_y)$ compares the
distance between $x^*$ and $\rho y$ to the distance between $x^*$ and
$\rho Y'$ for a randomly chosen distractor $ Y'\sim N(0,1)$:
\begin{align*}
U_{x^*}(m_y) &= \Pr[|\rho y - x^*|> |\rho Y' - x^*|] 
\\&= \Phi\left(\frac{x^* + |\rho y - x^*|}{\rho}\right) - \Phi\left(\frac{x^* - |\rho y - x^*|}{\rho}\right),
\end{align*}
where $\Phi$ is the standard normal cumulative distribution function.
Figure \ref{fig:toy3}(a) illustrates the level sets of the function
$U_{x^*}(m_y)$.  The highest values of $U_{x^*}(m_y)$ are near the
line $x^* = \rho y$ corresponding to the conditional mean of $X|Y$, and as
one moves farther from the line, $U_{x^*}(m_y)$ decays.  Note, however,
that large values of $x^*$ and $y$ (with the same sign) result in
larger values of $U_{x^*}(m_y)$ since it becomes unlikely for $Y' \sim
N(0,1)$ to exceed $Y = y$.

Using the formula above, we can calculate the correct-label
favorability $U^* = U_{X^*}(M_{Y^*})$ and its cumulative distribution
function ${D}(u)$.  The function ${D}$ is illustrated in Figure
\ref{fig:toy3}(b) for the current example with $\rho = 0.7$.  The red
curve in Figure \ref{fig:toy2} was computed using the formula
\[
\text{AGA}_k = 1-(k-1) \int {D}(u) u^{k-2} du.
\]

It is illuminating to consider how the average accuracy curves and the
${D}(u)$ functions vary as we change the parameter $\rho$.  Higher
correlations $\rho$ lead to higher accuracy, as seen in Figure
\ref{fig:toy4}(a), where the accuracy curves are shifted upward as
$\rho$ increases from 0.3 to 0.9.  The favorability $U_{x^*}(m_y)$
tends to be higher on average as well, which leads to lower values of
the cumulative distribution function--as we see in Figure
\ref{fig:toy4}(b), where the function ${D}(u)$ becomes smaller as
$\rho$ increases.

\begin{figure}[p]
\centering
\begin{tabular}{cc}
\begin{myfont}$U_x^*(M_y)$ for $\rho=0.7$\end{myfont}
& 
\begin{myfont}$D(u)$ for $\rho = 0.7$\end{myfont}\\
\includegraphics[scale = 0.6, clip = true, trim = 0.1in 0 0 0.8in]{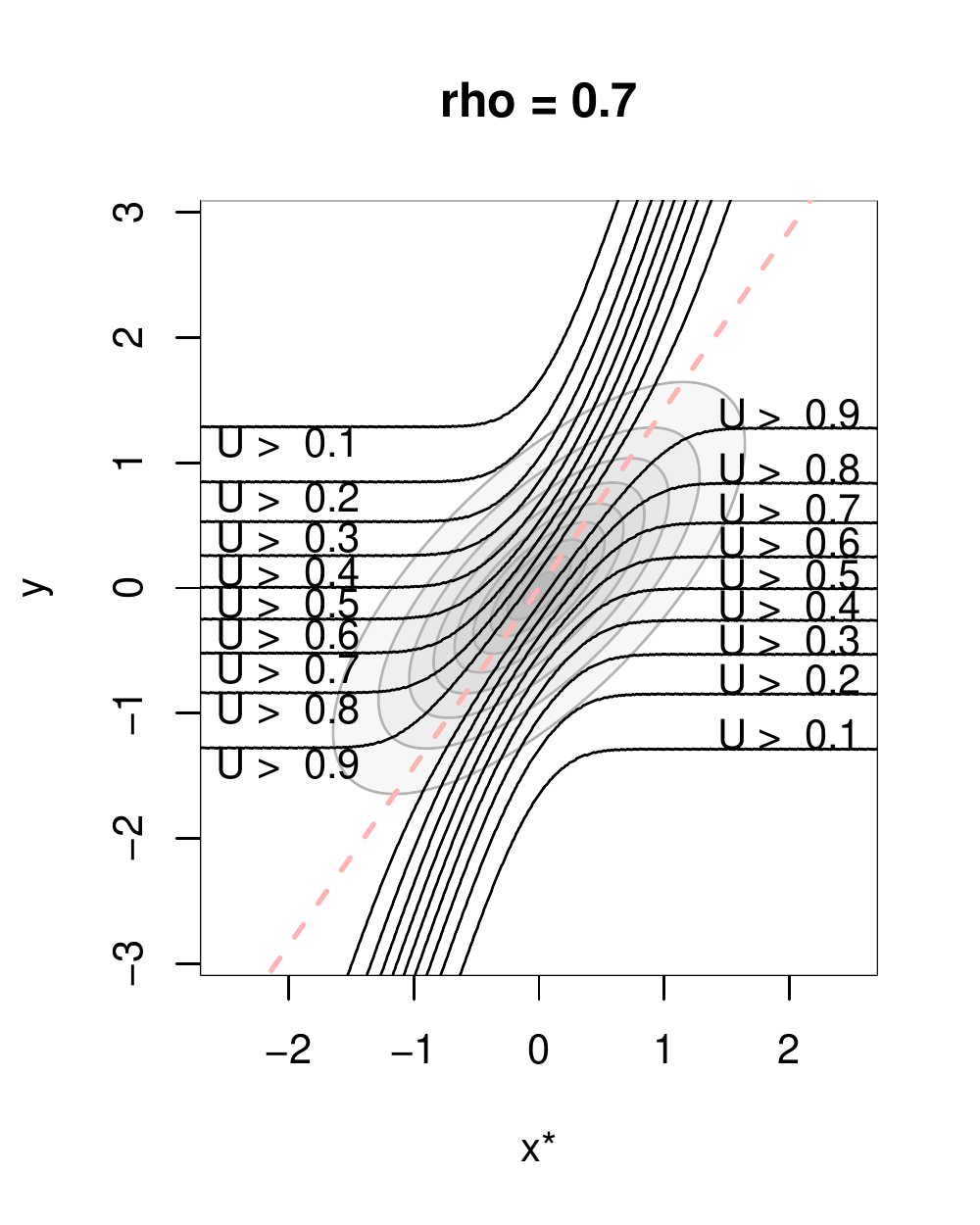} &
\includegraphics[scale = 0.6, clip = true, trim = 0.22in -0.3in 0 0.5in]{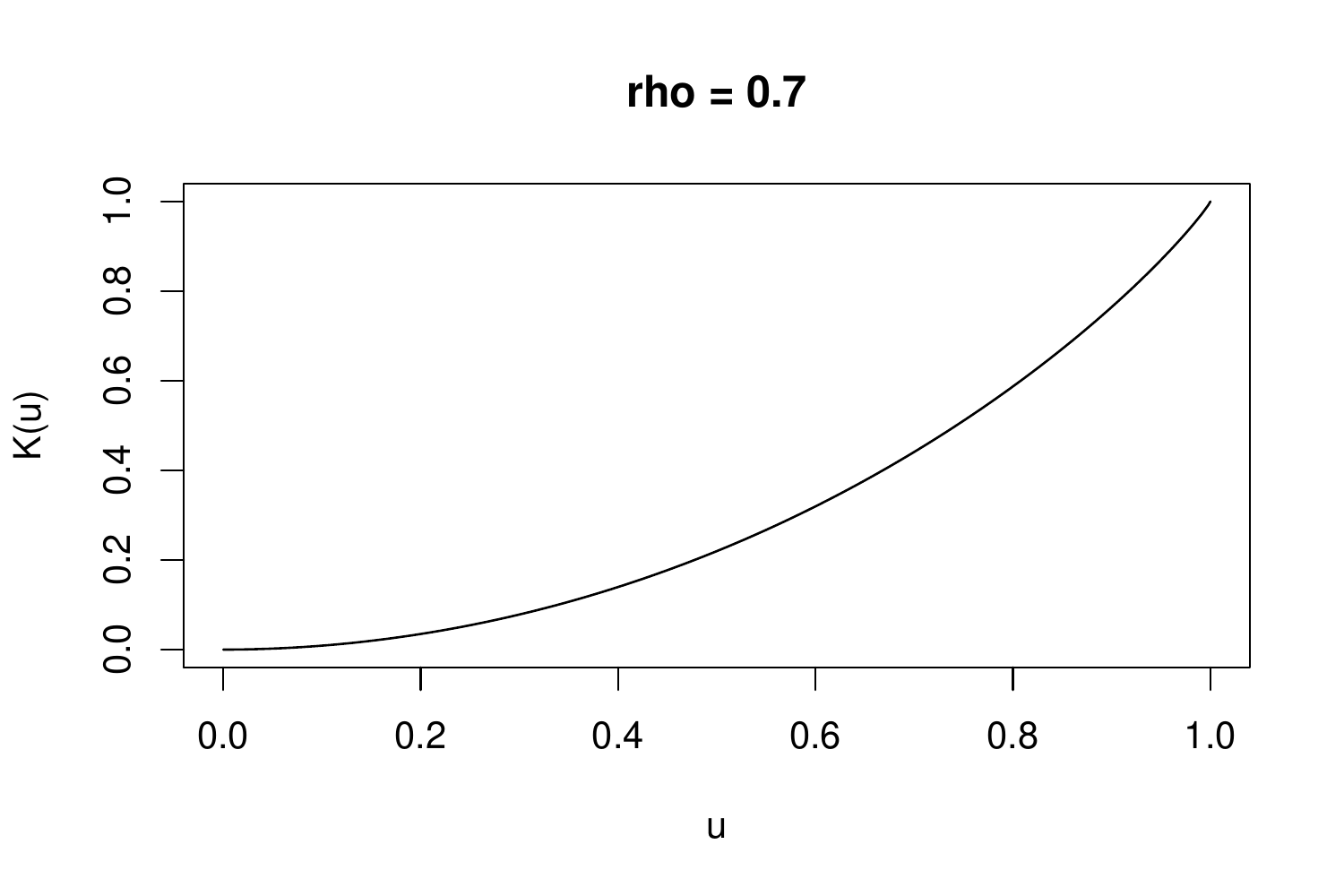}
\end{tabular}

\caption{\textbf{Favorability for toy example:}
\emph{Left:} The level curves of the function $U_{x^*}(M_y)$ in the bivariate normal model with $\rho = 0.7$.
\emph{Right:} The function ${D}(u)$ gives the cumulative distribution function of the random variable $U_{X^*}(M_Y)$.}\label{fig:toy3}
\end{figure}

\begin{figure}[p]
\centering
\begin{tabular}{cc}
\begin{myfont}\hspace{0.2in}$D(u)$\end{myfont} &
\begin{myfont}\hspace{0.4in}Average Accuracy\end{myfont} \\
\includegraphics[scale = 0.6, clip = true, trim = 0.22in 0 0.2in 0.6in]{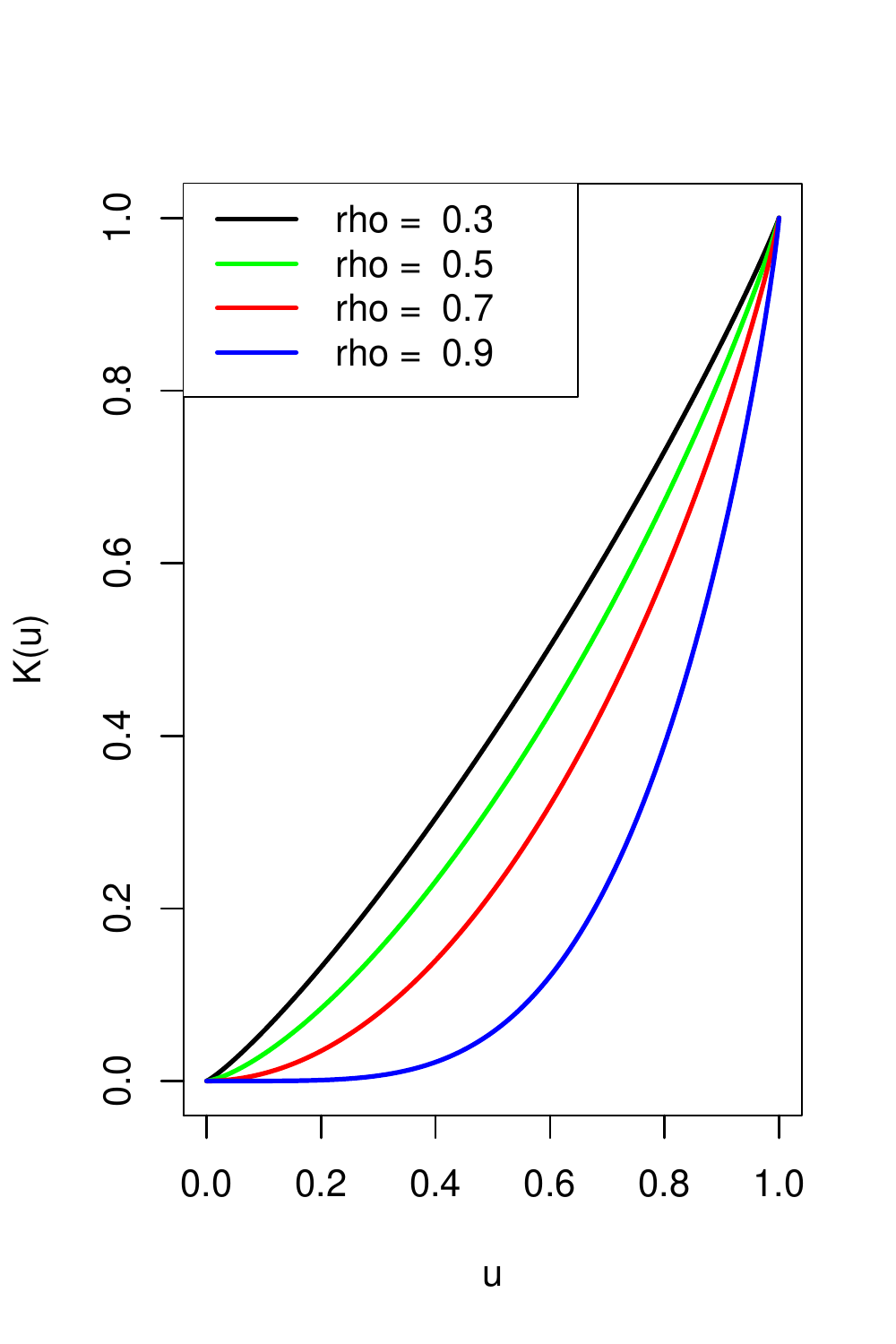} &
\includegraphics[scale = 0.6, clip = true, trim = 0.0in 0 0.2in 0.6in]{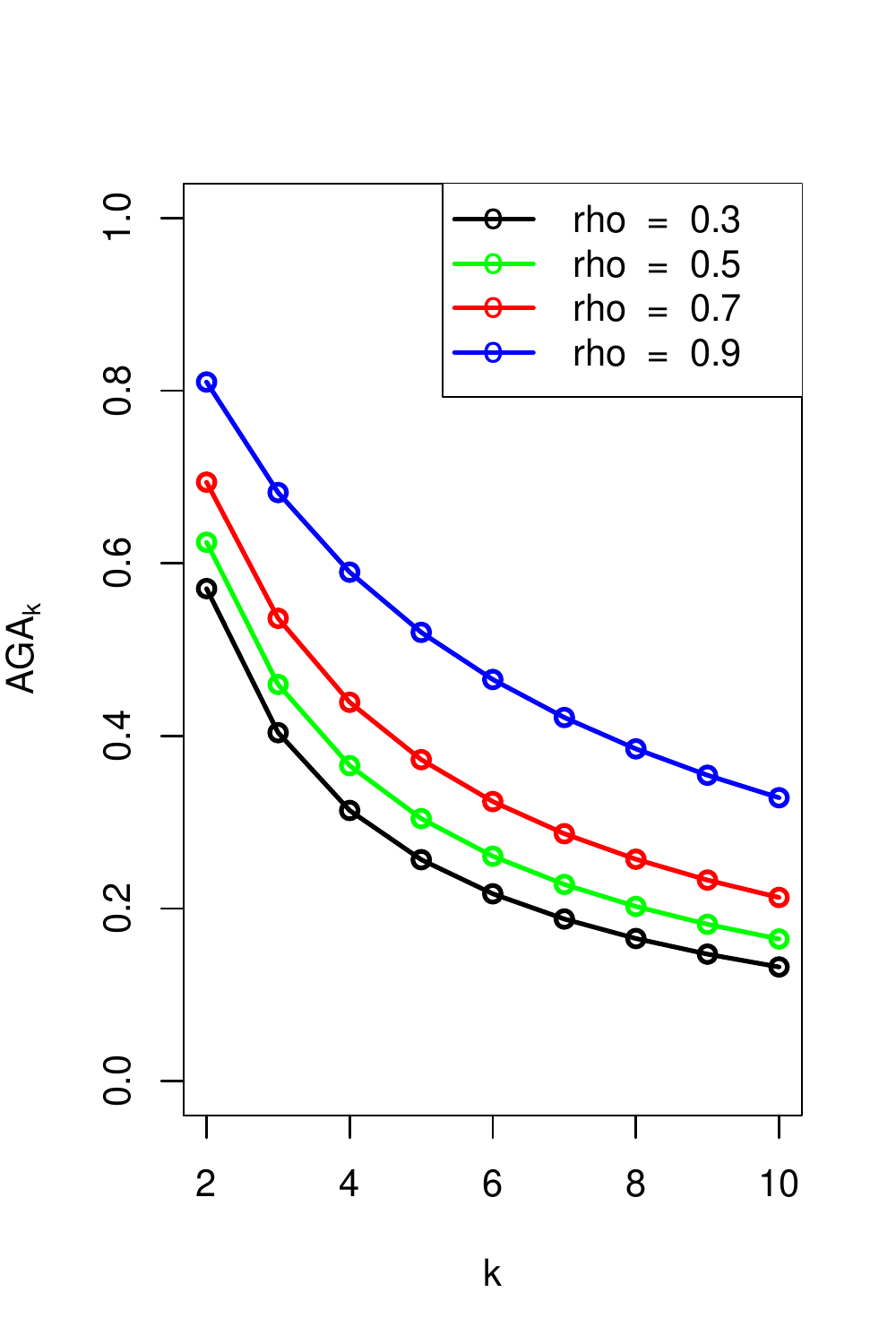}
\end{tabular}

\caption{\textbf{Average accuracy with different $\rho$'s:}
\emph{Left:} The average accuracy $\text{AGA}_k$. \emph{Right:} ${D}(u)$ function for the bivariate normal model with $\rho \in \{0.3, 0.5, 0.7, 0.9\}$.
}\label{fig:toy4}
\end{figure}

\subsection{Estimation}\label{sec:extrapolation_estimation}

Next, we discuss how to use data from smaller classification tasks to
extrapolate average accuracy.  Assume that we have data from a
$k_1$-class random classification task, and would like to estimate the
average accuracy $\text{AGA}_{k_2}$ for $k_2>k_1$ classes.  
Our estimation method will use the $k$-class average test accuracies,
$\text{ATA}_2,...,\text{ATA}_{k_1}$ (see Eq \ref{eq:avtestrisk}), for
its inputs.

The key to understanding the behavior of the average accuracy
$\text{AGA}_k$ is the function ${D}$.  We adopt a linear model
\begin{equation}\label{eq:linearKu}
{D}(u) = \sum_{\ell = 1}^m \beta_\ell h_\ell(u),
\end{equation}
where $h_\ell(u)$ are known basis functions, and $\beta_\ell$ are the
linear coefficients to be estimated.  Since our proposed method is
based on the linearity assumption \eqref{eq:linearKu}, we refer to it as ClassExReg,
meaning \emph{Classification Extrapolation using Regression}.

Conveniently, $\text{AGA}_k$ can also be expressed in terms of the $\beta_\ell$ coefficients.
If we plug in the assumed linear model \eqref{eq:linearKu} into the
identity \eqref{eq:avrisk_identity}, then we get
\begin{align}
1 - \text{AGA}_k &= (k-1)\int {D}(u) u^{k-2} du
\\&= (k-1)\int_0^1 \sum_{\ell = 1}^m \beta_\ell h_\ell(u) u^{k-2} du
\\&= \sum_{\ell = 1}^m \beta_\ell H_{\ell,k}, \label{eq:avrisk_linear}
\end{align}
where
\begin{equation}
H_{\ell,k} = (k-1) \int_0^1 h_\ell(u) u^{k-2} du.
\end{equation}
The constants $H_{\ell, k}$ are moments of the basis function
$h_\ell$.  Note that $H_{\ell, k}$ can be precomputed numerically for any $k \geq 2$.

Now, since the test accuracies $\text{ATA}_k$ are unbiased estimates
of $\text{AGA}_{k}$, this implies that the regression estimate
\[
\hat{\beta} = \argmin_\beta \sum_{k=2}^{k_1} \left( (1 - \text{ATA}_k) - \sum_{\ell=1}^m \beta_\ell H_{\ell, k}\right)^2,
\]
is unbiased for $\beta$. The estimate of $\text{AGA}_{k_2}$ is similarly obtained
from \eqref{eq:avrisk_linear}, via
\begin{equation}\label{eq:avrisk_hat}
\widehat{\text{AGA}_{k_2}} = 1 - \sum_{\ell=1}^m \hat{\beta}_\ell H_{\ell, k_2}.
\end{equation}

\subsection{Model Selection}\label{sec:modelselection}

Accurate extrapolation using ClassExReg depends on a good
fit between the linear model \eqref{eq:linearKu} and the true
discriminability function $D(u)$.  However, since the function $D(u)$
depends on the unknown joint distribution of the data, it makes sense
to let the data help us choose a good basis $\{h_u\}$ from a set of
candidate bases.

Let $B_1,\hdots, B_s$ be a set of candidate bases, with $B_i = \{h_u^{(i)}\}_{u=1}^{m_i}$.  Ideally, we would like our model selection procedure to choose the $B_i$ that obtains the best root-mean-squared error (RMSE) on the extrapolation from $k_1$ to $k_2$ classes.  As an approximation, we estimate the RMSE of extrapolation from $\frac{k_1}{2}$ source classes to $k_1$ target classes, by means of the ``bootstrap principle.''  This amounts to a resampling-based model selection approach, where we perform extrapolations from $k_0 = \lfloor \frac{k_1}{2} \rfloor$ classes to $k_1$ classes, and evaluate methods based on how closely the predicted $\widehat{\text{ABA}}_{k_1}$ matches the test accuracy $\text{ATA}_{k_1}$.  To elaborate, our model selection procedure is as follows.

\begin{enumerate}
\item For $\ell=1,\hdots,L$ resampling steps:
\begin{enumerate}
\item Subsample $\mathcal{S}_{k_0}^{(\ell)}$ from $\mathcal{S}_{k_1}$ uniformly with replacement.
\item Compute average test accuracies $\text{ATA}_{2}^{(\ell)},\hdots,\text{ATA}_{k_0}^{(\ell)}$ from the subsample $\mathcal{S}_{k_0}^{(\ell)}$.
\item For each candidate basis $B_i$, with $i = 1,\hdots, s$:
\begin{enumerate}
\item Compute $\hat{\beta}^{(i,\ell)}$ by solving the least-squares problem
\[\hat{\beta}^{(i,\ell)} = \argmin_\beta \sum_{k=2}^{k_0} \left( (1 - \text{ATA}_k^{(\ell)}) - \sum_{j=1}^{m_i} \beta_j H_{j, k}^{(i)}\right)^2.\]
\item Estimate $\widehat{\text{AGA}}_{k_1}^{(i,\ell)}$ by
\[
\widehat{\text{AGA}}_{k_1}^{(i,\ell)} = \sum_{\ell=1}^{m_i} \hat{\beta}_j^{(i,\ell)} H_{j, k_1}^{(i)}.
\]
\end{enumerate}
\end{enumerate}
\item Select the basis $B_{i^*}$ by
\[
i^* = \text{argmin}_{i=1}^s \sum_{\ell=1}^L (\widehat{\text{AGA}}_{k_1}^{(i,\ell)} - \text{ATA}_{k_1})^2.
\]
\item Use the basis $B_{i^*}$ to extrapolate from $k_1$ classes (the full data) to $k_2$ classes.
\end{enumerate}

\section{Simulation Study}\label{sec:simulation_study}

We ran simulations to check how the proposed extrapolation method, ClassExReg,
performs in different settings.  The results are displayed in Figure
\ref{fig:sim_study}. 
We varied the number of classes $k_1$ in the
source data set, the difficulty of classification, and the basis
functions. We generated data according to a mixture of isotropic
multivariate Gaussian distributions: labels $Y$ were sampled from $Y
\sim N(0, I_{10})$, and the examples for each label sampled from $X|Y
\sim N(Y, \sigma^2 I_{10})$. The noise-level parameter $\sigma$
determines the difficulty of classification. Similarly to the
real-data example, we consider a 1-nearest neighbor classifier, which
is given a single training instance per class.

For the estimation, we use the model selection procedure described
in section \ref{sec:modelselection} to select the parameter $h$ of the
``radial basis''
\[
h_\ell(u) = \Phi\left(\frac{\Phi^{-1}(u) - t_\ell}{h}\right).
\]
where $t_\ell$ are a set of regularly spaced knots which are
determined by $h$ and the problem parameters. Additionally, we add a
constant element to the basis, equivalent to adding an intercept to
the linear model \eqref{eq:linearKu}. 

The rationale behind the radial basis is to model the density of
$\Phi^{-1}(U^*)$ as a mixture of gaussian kernels with variance $h^2$.
To control overfitting, the knots are separated by at least a distance
of $h/2$, and the largest knots have absolute value $\Phi^{-1}(1 -
\frac{1}{rk_1^2}).$ The size of the maximum knot is set this way since
$rk_1^2$ is the number of ranks that are calculated and used by our
method.  Therefore, we do not expect the training data to contain
enough information to allow our method to distinguish between more
than $rk_1^2$ possible accuracies, and hence we set the maximum knot
to prevent the inclusion of a basis element that has on average a
higher mean value than $u = 1-\frac{1}{rk_1^2}$.  However, in
simulations we find that the performance of the basis depends only
weakly on the exact positioning and maximum size of the knots, as long
as sufficiently large knots are included. As is the case throughout non-parametric statistics, the bandwidth $h$ is the
most crucial parameter.  In the simulation, we use a grid $h = \{0.1,
0.2, \hdots, 1\}$ for bandwidth selection.

\subsection{Comparison to Kay}
\label{sec:KDEcomparison}
In their paper,\footnote{The KDE extrapolation method is described in page 29 of supplement to \cite{Kay2008a}.  While the method is only described for
a one-nearest neighbor classifier and for the setting where there is at most
one test observation per class, we have taken the liberty of extending it to a generic multi-class classification problem.}
\cite{Kay2008a} proposed a method for extrapolating classification
accuracy to a larger number of classes. The method depends on repeated
kernel-density estimation (KDE) steps. Because the method is only
briefly motivated in the original text, we present it in our
notation. 

For $k_1$ observed classes, let $m_{y^{(j)}}(x^{(i)}_\ell)\, 1\leq i,j\leq k_1$ be
the observed score comparing feature vector of the $\ell$'th test example of the $i$'th class $x^{(i)}_\ell$
to the model trained for the $j$'th class $m_{y^{(j)}}$. 
For each feature-vector $x^{(i)}_\ell$, the density of wrong-class scores is estimated by smoothing the observed scores
with a kernel function $K(\cdot,\cdot)$ with bandwidth $h$,
\[\hat{f}_\ell^{(i)}(m) = \frac{1}{k_1-1} \sum_{j\neq i} K_h(m_{y^{(j)}}(x^{(i)}_\ell), m).\]
An estimate of favorability for $x^{(i)}_\ell$ against a
single competitor is obtained by integrating the density below the observed true score $m_{y^{(i)}}(x^{(i)}_\ell)$, 
\[acc_2(x_\ell^{(i)}) = \int_{0}^{m_{y^{(i)}}(x^{(i)}_\ell)} \hat{f}_\ell^{(i)}(m)dm,\]
and the probability of accurate classification against $K-1$ random
competitors is $acc_{K}(x^{(i)}_\ell) = (acc_2(x^{(i)}_\ell))^{K-1}$. The average of these probabilities is the estimate for generalization accuracy
\[\hat{AGA}_{K}^{(KDE)} = \frac{1}{k_1 r} \sum_{i=1}^{k_1} \sum_{\ell=1}^{r} acc_{K}(x^{(i)}_\ell).\]

Note that the KDE method depends non-trivially on the smoothing bandwidth used in the density estimation step: when the kernel bandwidth is too small compared to the number of classes, the extrapolated accuracy will be upwardly biased.  To see
this, consider a feature vector $x^{(i)}_\ell$ that is correctly classified in the original class set. That is, $m_{y^{(i)}}(x^{(i)}_\ell)>m_{y^{(j)}}(x^{(i)}_\ell)$ for every
$j\neq i$. If we use for $\hat{f}_i$ the unsmoothed empirical density,
$acc_2(x^{(i)}_\ell)=1$ and therefore $acc_K(x^{(i)}_\ell) = 1$ as well. The method relies on smoothing of each class to generate a the tail density that exceeds $m_{y^{(i)}}(x^{(i)}_\ell)$, and therefore it is highly dependent on the choice of kernel bandwidth. 

\cite{Kay2008a} recommended using a Gaussian
kernel, with the bandwidth chosen via pseudolikelihood cross-validation \citep{cao1994comparative}.
In our simulation, we tested our own implementation of the KDE method,
using the two methods for cross-validated KDE estimation provided in
the {\tt stats} package in the {\tt R} statistical computing
environment: biased cross-validation and unbiased cross-validation \citep{Scott1992}.

\subsection{Simulation Results}

We see in Figure \ref{fig:sim_study} that ClassExReg and the KDE methods with unbiased and biased
cross-validation (KDE-UCV, KDE-BCV) perform comparably in the Gaussian
simulations.  We studied how the difficulty of extrapolation relates to both the
absolute size of the number of classes and the extrapolation factor $\frac{k_2}{k_1}$.  Our simulation has two settings for $k_1 = \{500,5000\}$, and
within each setting we have extrapolations to 2 times, 4 times, 10
times, and 20 times the number of classes.

\begin{table}
\centering
\begin{tabular}{cc||c|c|c}
\hline
$k_1$ & $k_2$ & ClassExReg & KDE-BCV & KDE-UCV \\\hline 
500 & 1000 & \textbf{0.032} (0.001) & 0.090 (0.001) & 0.067 (0.001) \\
500 & 2000 & \textbf{0.044} (0.002) & 0.088 (0.001) & 0.059 (0.001) \\
500 & 5000 & 0.073 (0.004) & 0.079 (0.001) & \textbf{0.051} (0.001) \\
500 &10000 & 0.098 (0.004) & 0.076 (0.001) & \textbf{0.045} (0.001) \\\hline
5000 & 10000 & \textbf{0.009} (0.000) & 0.038 (0.000) & 0.028 (0.000) \\
5000 & 20000 & \textbf{0.015} (0.001) & 0.028 (0.000) & 0.019 (0.000) \\
5000 & 50000 & \textbf{0.032} (0.002) & 0.035 (0.000) & 0.053 (0.000) \\
5000 &100000 & \textbf{0.054} (0.003) & 0.065 (0.000) & 0.086 (0.000) \\\hline
\end{tabular}
\caption{Maximum RMSE (se) across all signal-to-noise-levels in
  predicting $\text{TA}_{k_2}$ from $k_1$ classes in multivariate
  gaussian simulation.  Standard errors were computed by nesting the
  maximum operation within the bootstrap, to properly account for the variance of a maximum of estimated means.}\label{tab:sim_max_error}
\end{table}

Within each problem setting defined by the number of
source and target classes $(k_1,k_2)$, we use the maximum RMSE across
all signal-to-noise settings to quantify the overall performance of
the method, as displayed in Table \ref{tab:sim_max_error}.

The results also indicate that more accurate extrapolation
appears to be possible for smaller extrapolation ratios
$\frac{k_2}{k_1}$ and larger $k_1$.
ClassExReg improves in worst-case RMSE when moving from $k_1 =500$ to $k_1 = 5000$ while keeping the extrapolation factor fixed, most dramatically in the case $\frac{k_2}{k_1} = 2$ when it improves from a
maximum RMSE of $0.032\pm0.001$ ($k_1 = 500$) to $0.009 \pm 0.000$ ($k_1 = 5000$), which is 3.5-fold reduction in worst-case RMSE, but
also benefiting from at least a 1.8-fold reduction in RMSE when going from
the smaller problem to the larger problem in the other three cases.

The kernel-density method produces comparable results, but is seen to depend strongly on the choice of bandwidth selection: KDE-UCV and KDE-BCV show very different performance profiles, although they differ only in the method used to choose the bandwidth.
Also, the KDE methods show significant estimation bias, as can be seen from Figure \ref{fig:sim_study_bias}.  
This can be explained by the fact that
the KDE method ignores the bias introduced by exponentiation.
That is, even if $\hat{p}$ is an unbiased estimator of $p$, $\hat{p}^k$ may \emph{not} be a very good estimate of $p^k$, since
\[
p^k = \E[\hat{p}]^k \neq \E[\hat{p}^k].
\]
unless $\hat{p}$ is a degenerate random variable
(constant). Otherwise, for large $k$, $\hat{p}^k$ may be an extremely
biased estimate of $p$.  ClassExReg avoids this source of
bias by estimating the $(k-1)$st moment of $D(u)$ directly.  As we see
in Figure \ref{fig:sim_study_bias}, correcting for the bias of
exponentiation helps greatly to reduce the overall bias.   Indeed, while
ClassExReg shows comparable bias for the 500 to 10000 extrapolation,
the bias is very well-controlled in all of the $k_1 = 5000$ extrapolations.


\begin{figure}[p]
\centering
\begin{tabular}{cc}
\multicolumn{2}{c}{\begin{myfont}Extrapolating from $k_1 = 500$\end{myfont}}\\
\begin{myfont}Predicting $\text{AGA}_{1000}$\end{myfont} &
\begin{myfont}Predicting $\text{AGA}_{2000}$\end{myfont}\\
\includegraphics[scale = 0.5, clip = true, trim = 0 0 1.25in 0.45in]{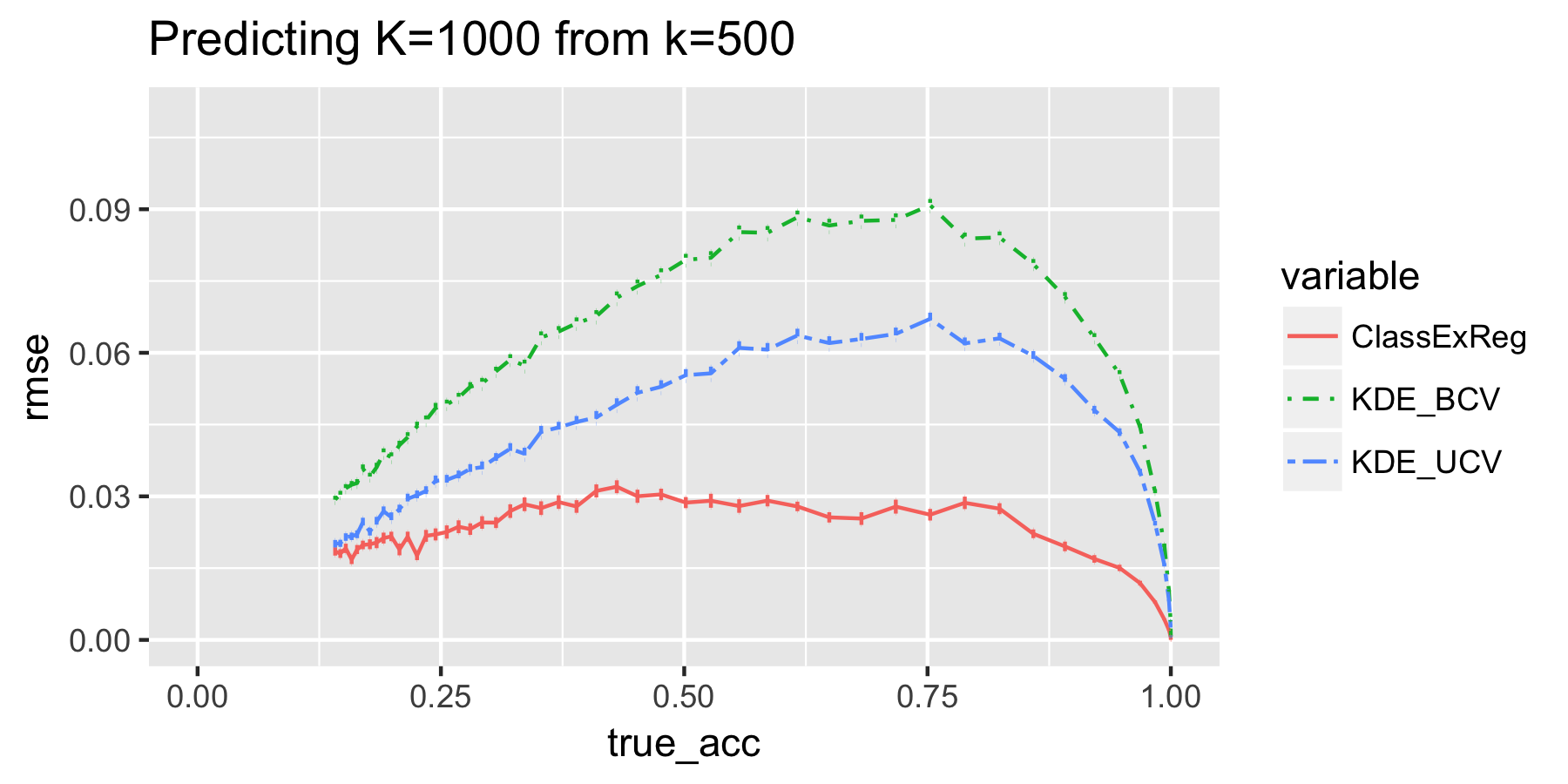} &
\includegraphics[scale = 0.5, clip = true, trim = 0 0 0 0.45in]{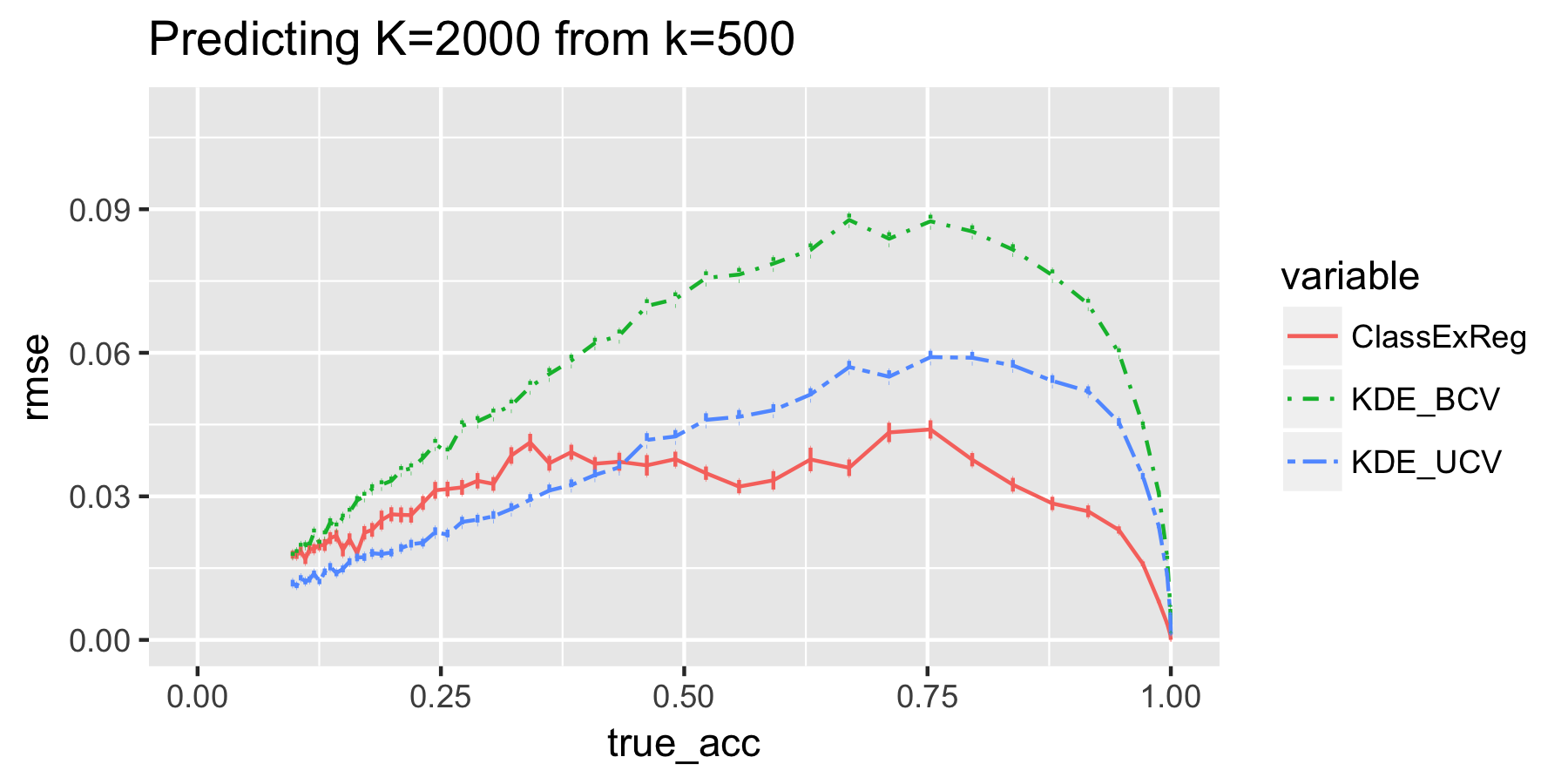}\\
\begin{myfont}Predicting $\text{AGA}_{5000}$\end{myfont} &
\begin{myfont}Predicting $\text{AGA}_{10000}$\end{myfont}\\
\includegraphics[scale = 0.5, clip = true, trim = 0 0 1.25in 0.45in]{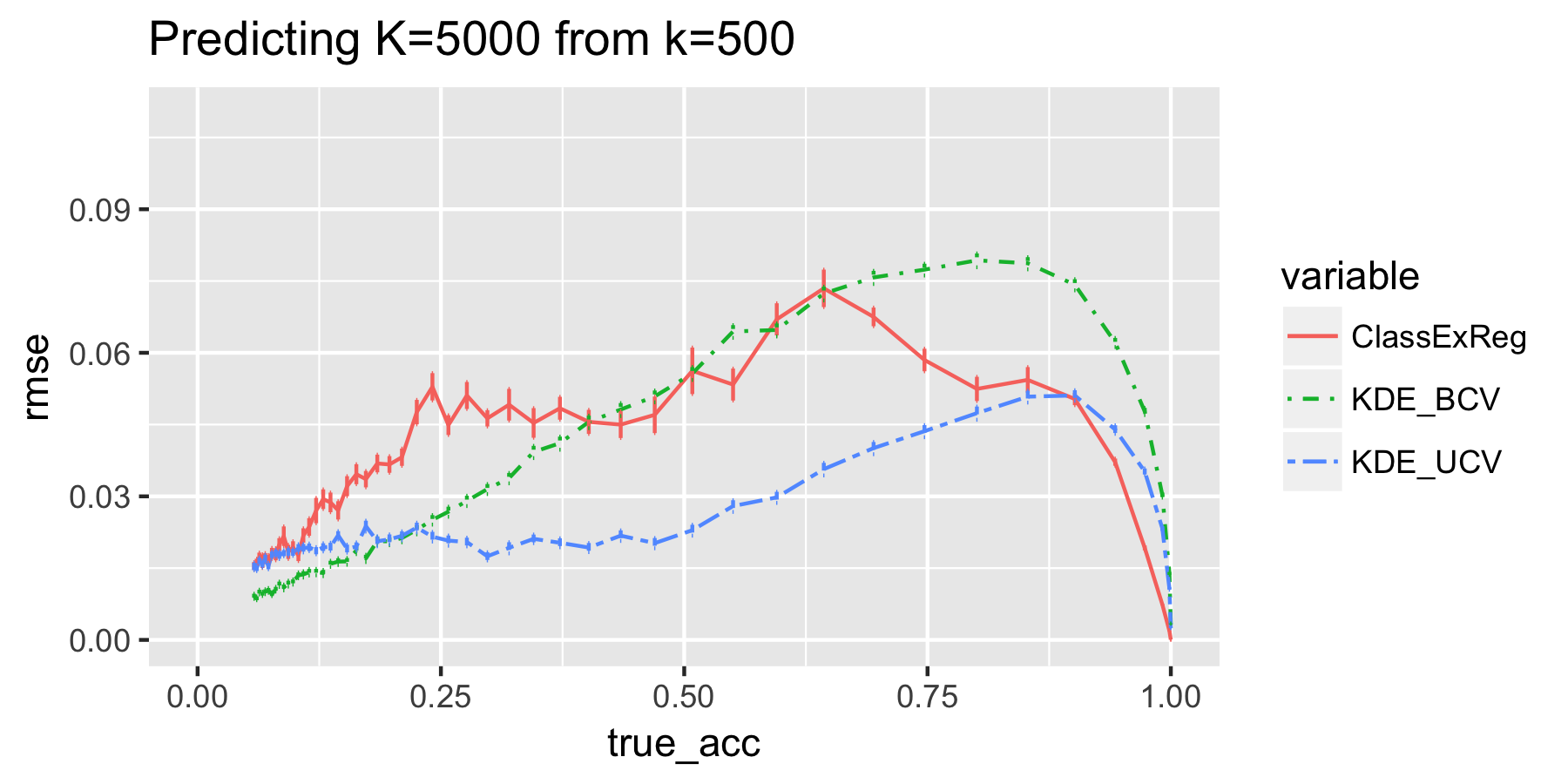} &
\includegraphics[scale = 0.5, clip = true, trim = 0 0 0 0.45in]{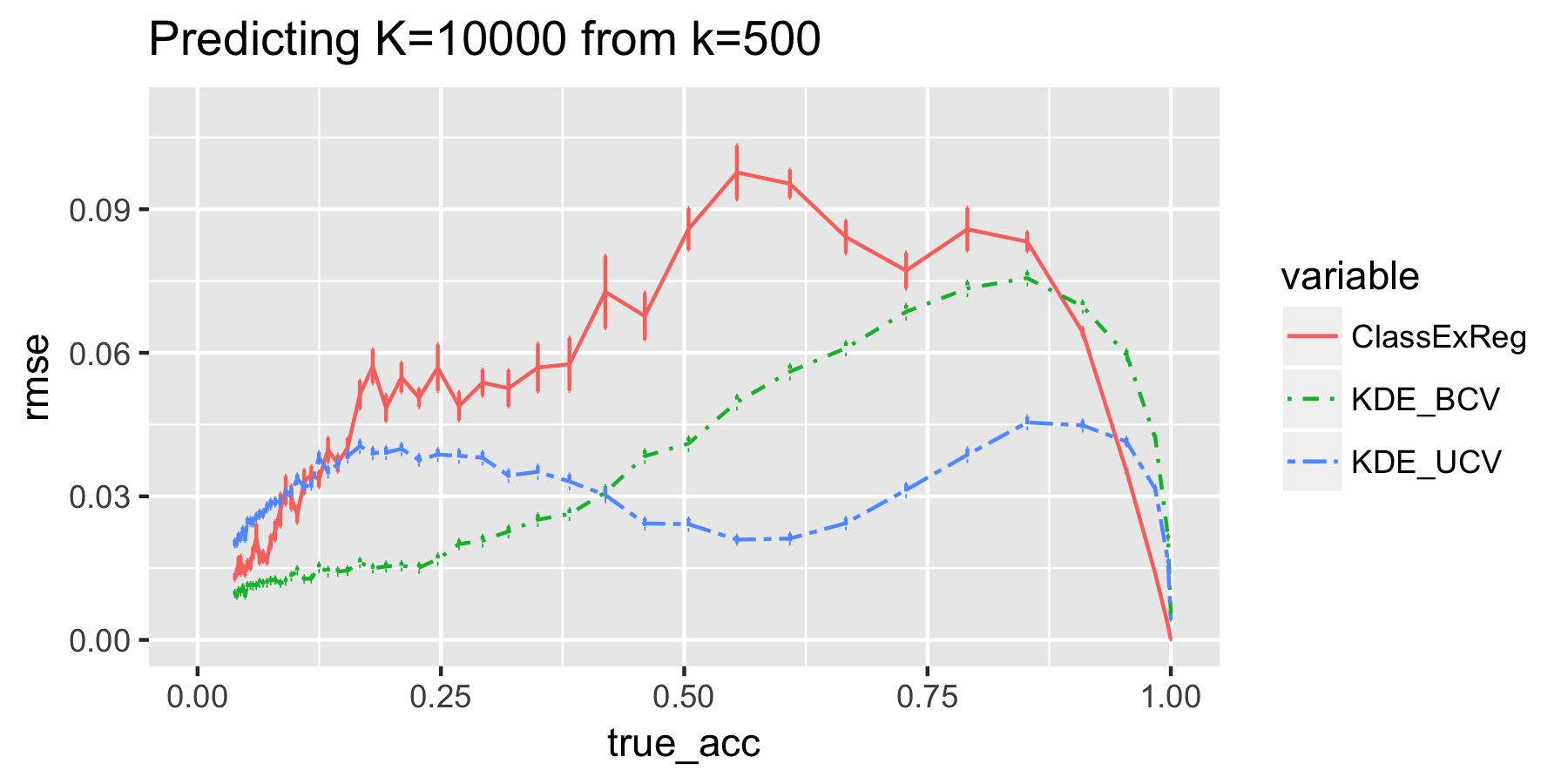}\\ 
\multicolumn{2}{c}{\begin{myfont}Extrapolating from $k_1 = 5000$\end{myfont}}\\
\begin{myfont}Predicting $\text{AGA}_{10000}$\end{myfont} &
\begin{myfont}Predicting $\text{AGA}_{20000}$\end{myfont}\\
\includegraphics[scale = 0.5, clip = true, trim = 0 0 1.25in 0.45in]{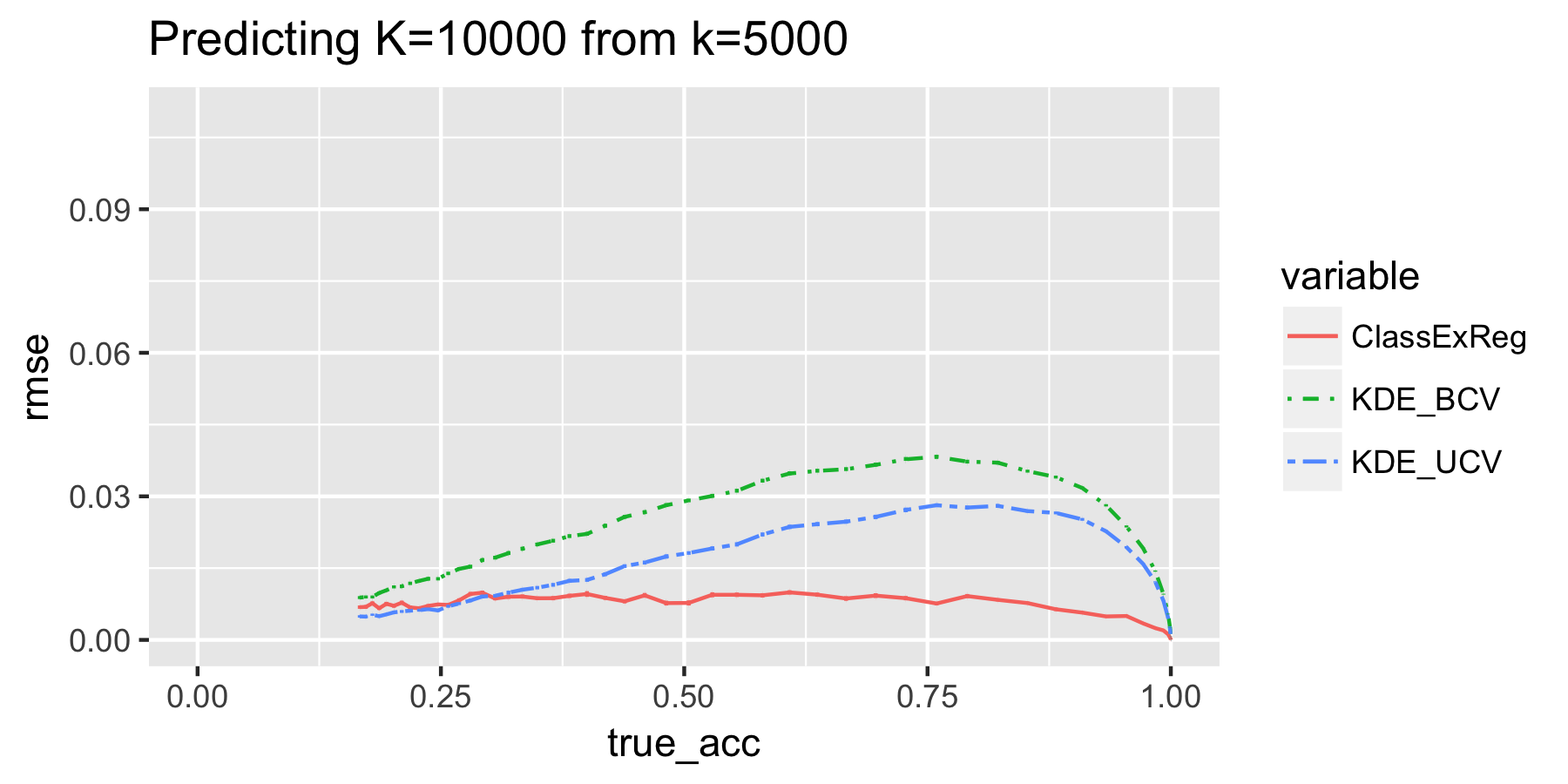} &
\includegraphics[scale = 0.5, clip = true, trim = 0 0 0 0.45in]{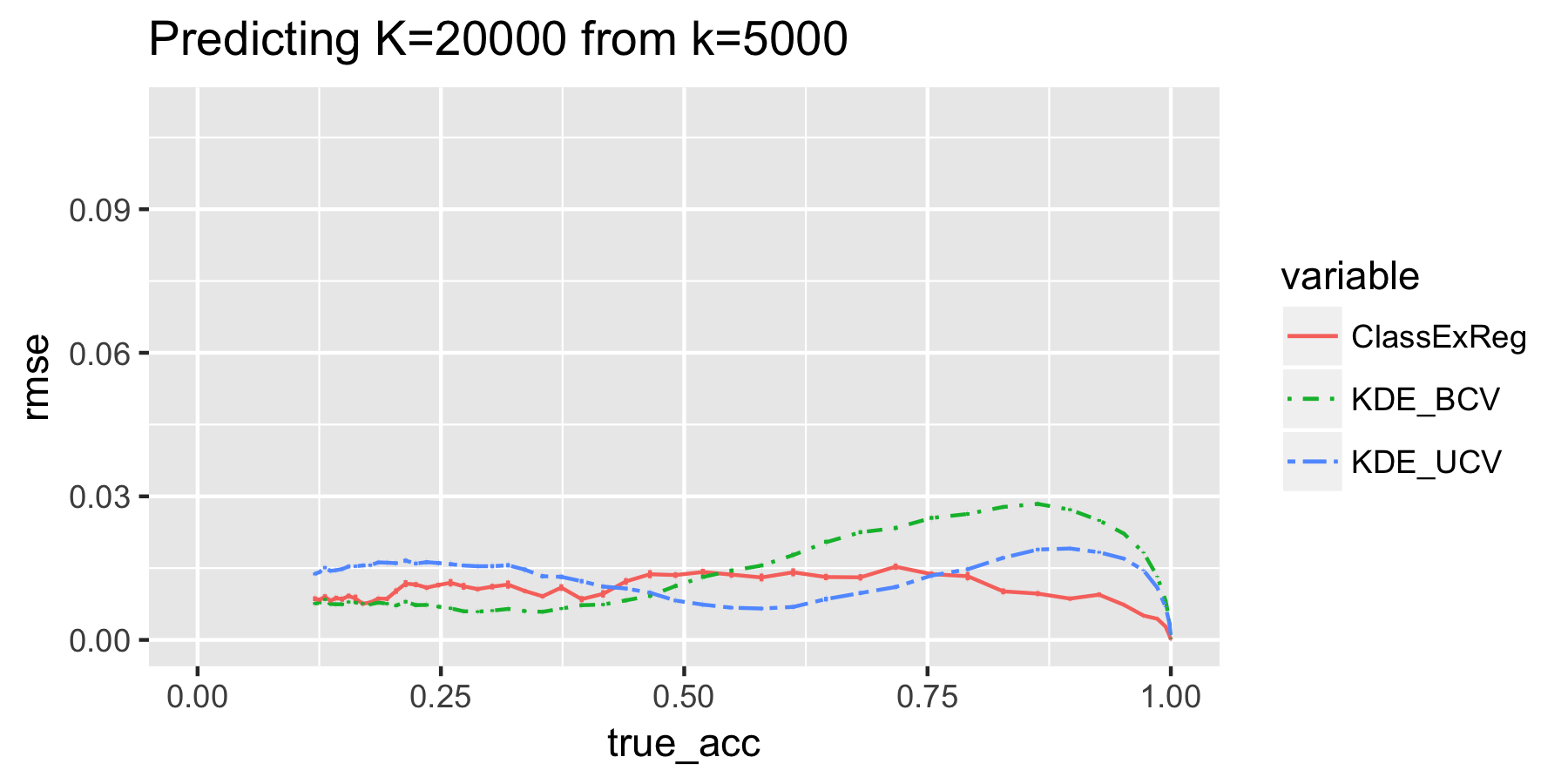}\\
\begin{myfont}Predicting $\text{AGA}_{50000}$\end{myfont} &
\begin{myfont}Predicting $\text{AGA}_{100000}$\end{myfont}\\
\includegraphics[scale = 0.5, clip = true, trim = 0 0 1.25in 0.45in]{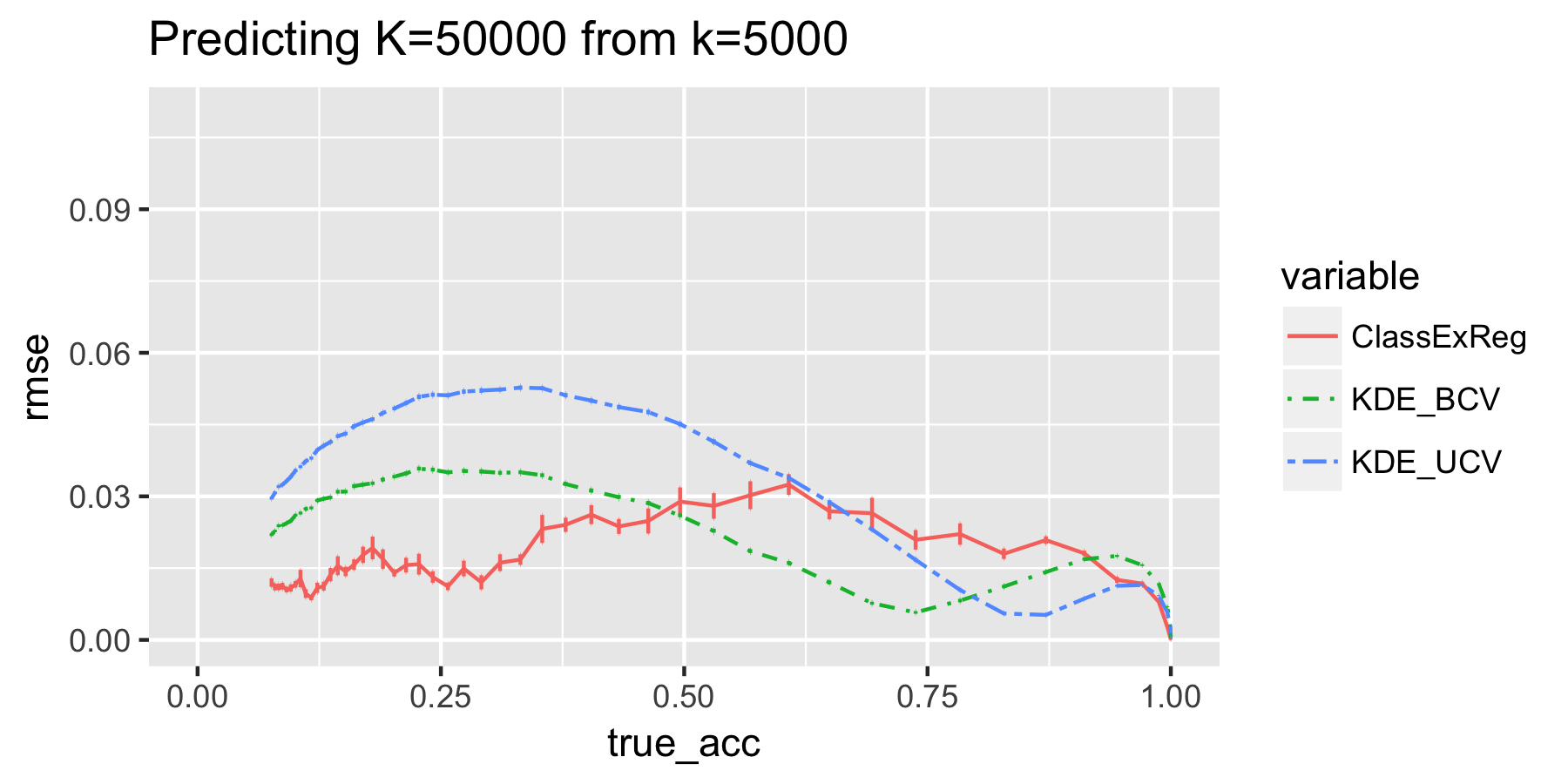} &
\includegraphics[scale = 0.5, clip = true, trim = 0 0 0 0.45in]{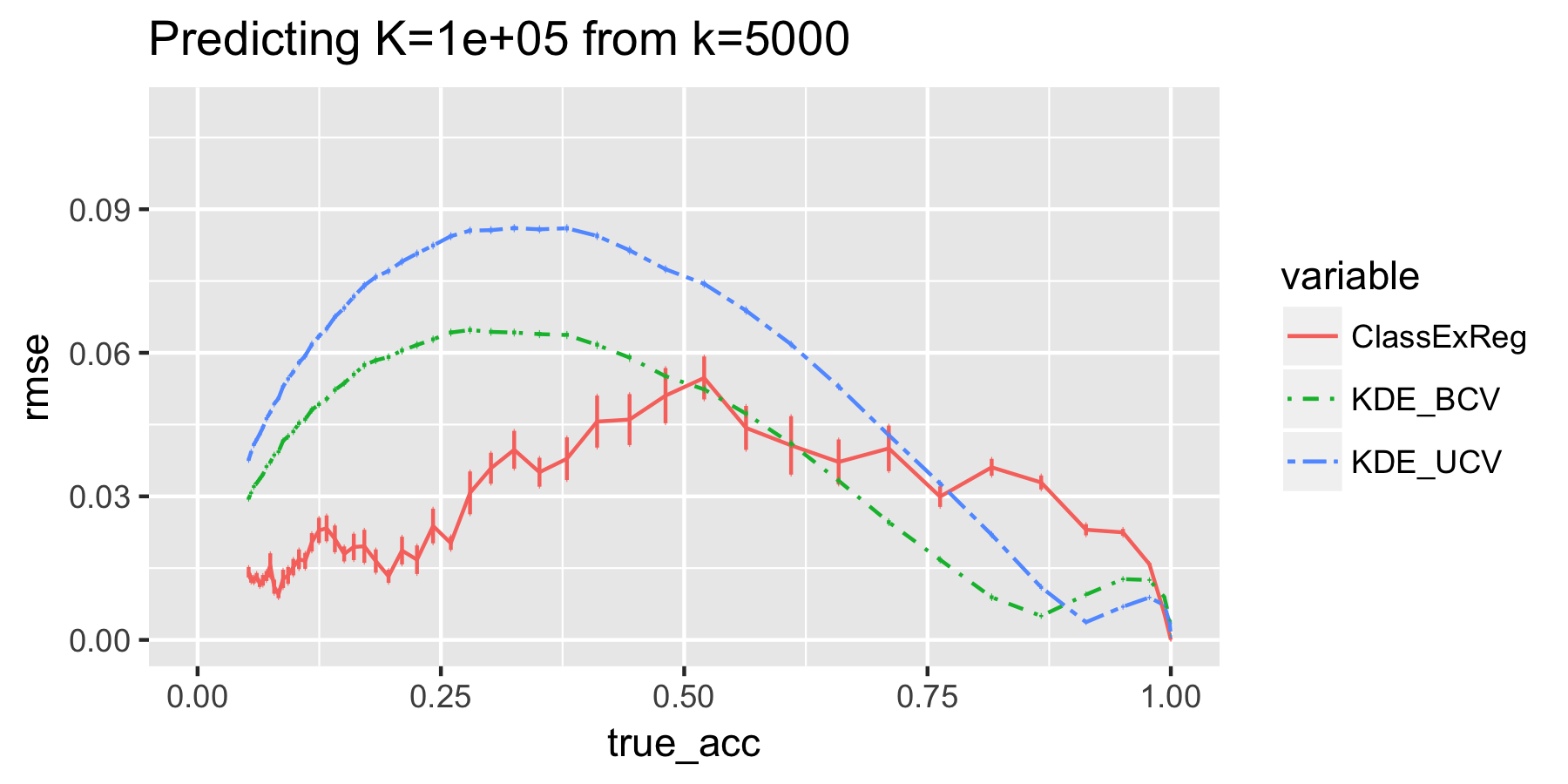}\\ 
\end{tabular}
\caption{\textbf{Simulation results (RMSE):} Simulation study consisting of
  multivariate Gaussian $Y$ with nearest neighbor classifier.
  Prediction RMSE vs true $k_2$-class accuracy for ClassExReg with radial basis
  (\textsf{ClassExReg}), KDE-based methods with biased cross-validation
  (\textsf{KDE\_BCV}) and unbiased cross-validation (\textsf{KDE\_UCV}).}
\label{fig:sim_study}
\end{figure}

\begin{figure}[p]
\centering
\begin{tabular}{ccc}
\multicolumn{3}{c}{\begin{myfont}Extrapolating from $k_1 = 500$\end{myfont}}\\
\begin{myfont}Predicting $\text{AGA}_{2000}$\end{myfont} &
\begin{myfont}Predicting $\text{AGA}_{5000}$\end{myfont} &
\begin{myfont}Predicting $\text{AGA}_{10000}$\end{myfont}\\
\includegraphics[scale = 0.45, clip = true, trim = .22in 0 1.23in 0.4in]{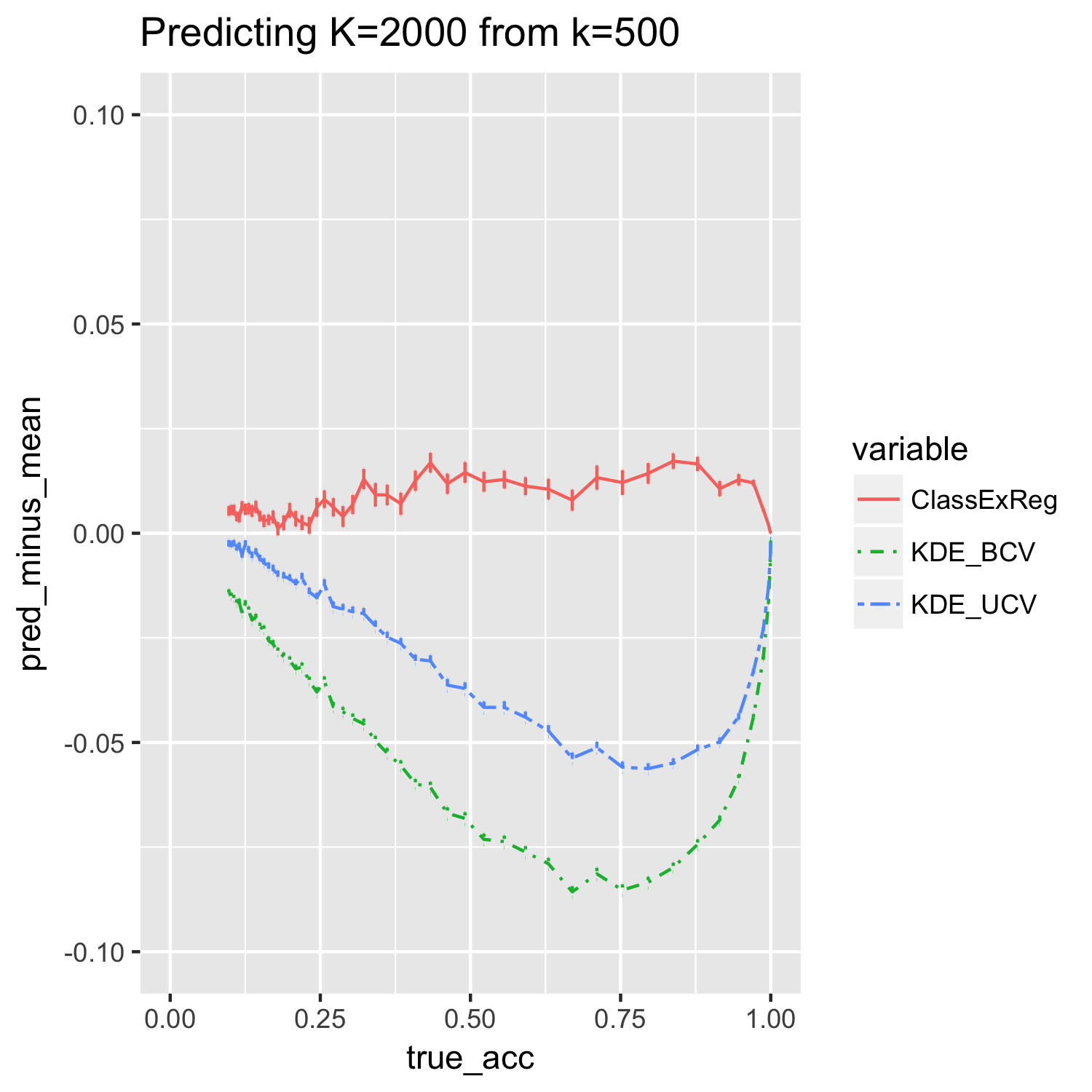} &
\includegraphics[scale = 0.45, clip = true, trim = .3in 0 1.23in 0.4in]{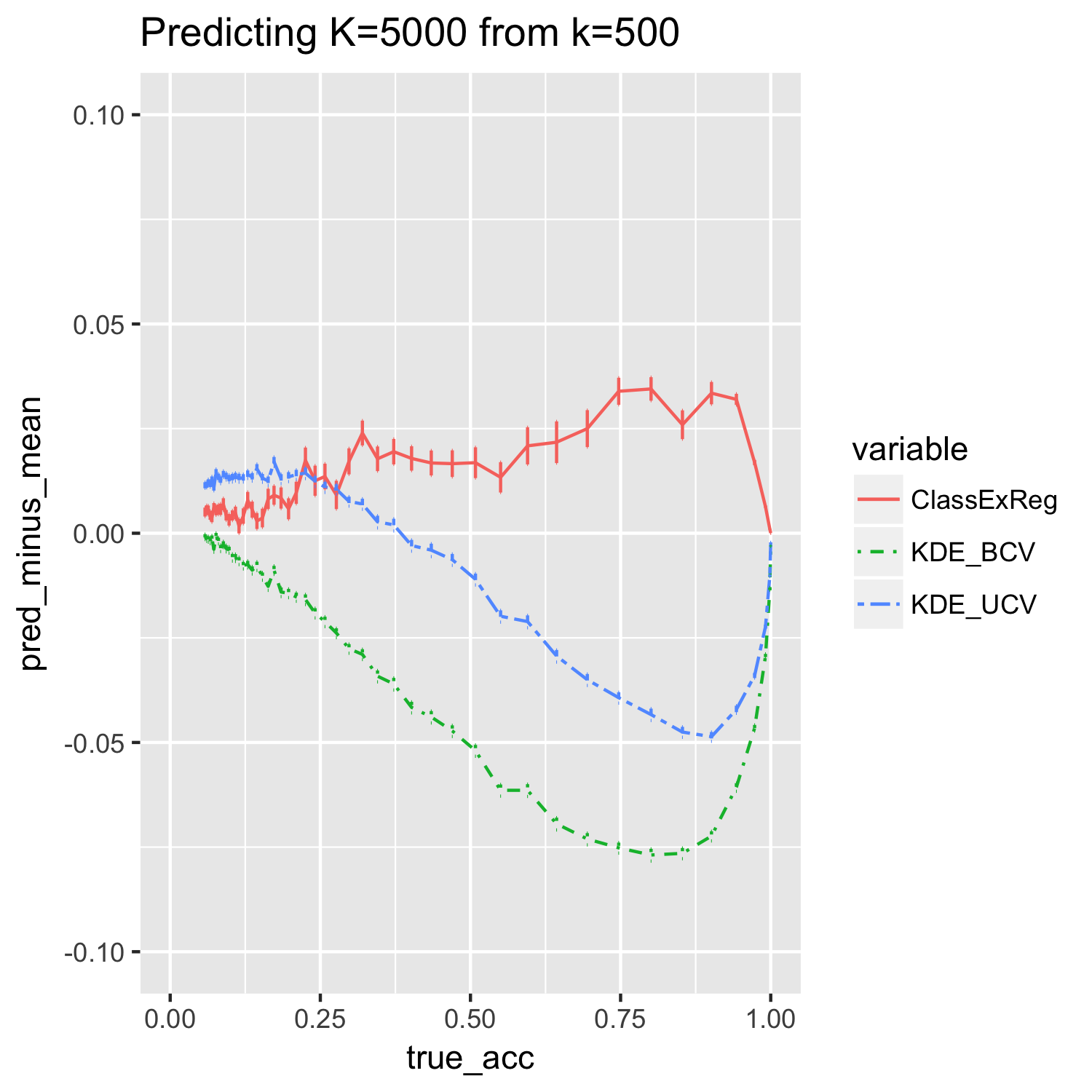} &
\includegraphics[scale = 0.45, clip = true, trim = .3in 0 0.00in 0.4in]{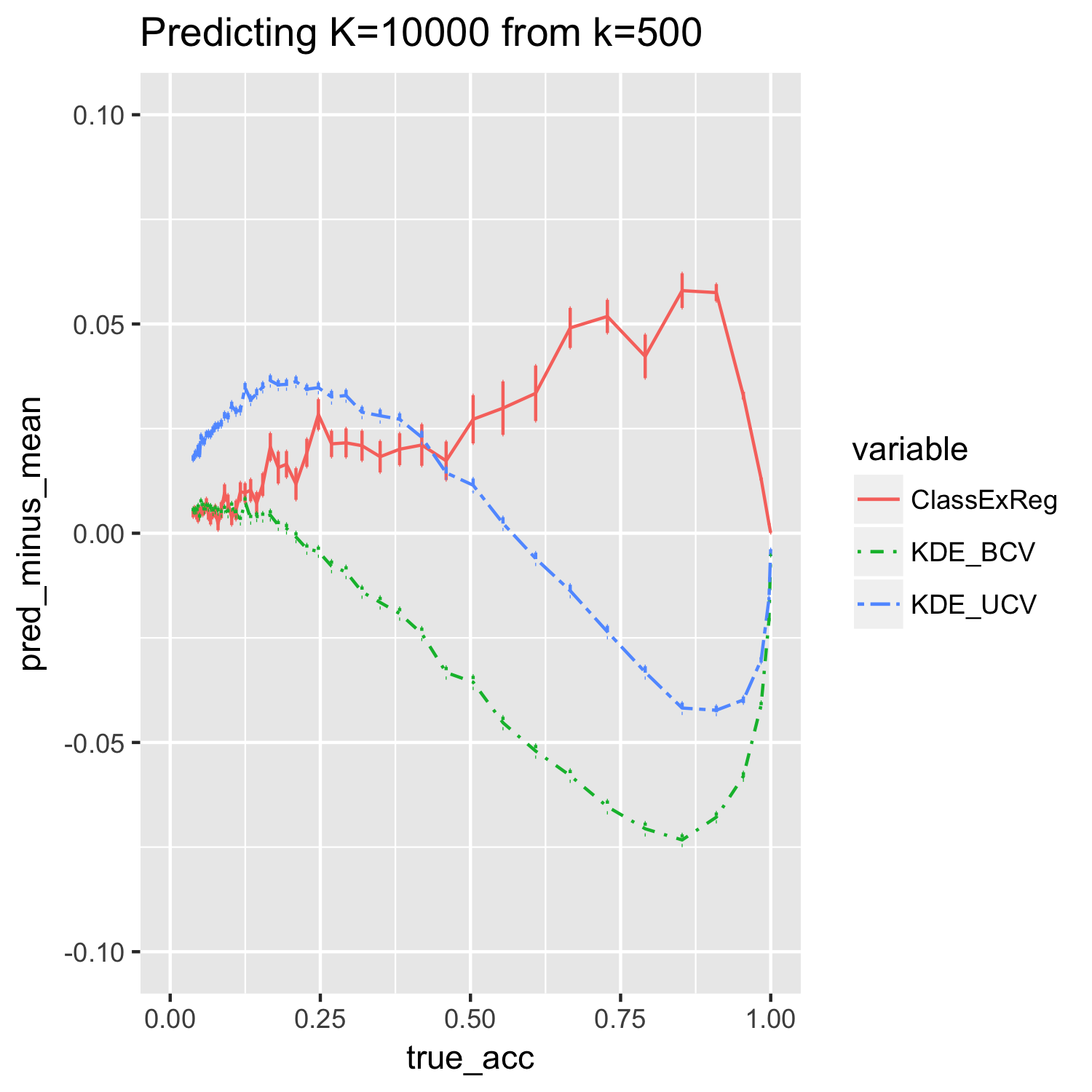}\\ 
\multicolumn{3}{c}{\begin{myfont}Extrapolating from $k_1 = 5000$\end{myfont}}\\
\begin{myfont}Predicting $\text{AGA}_{20000}$\end{myfont} &
\begin{myfont}Predicting $\text{AGA}_{50000}$\end{myfont} &
\begin{myfont}Predicting $\text{AGA}_{100000}$\end{myfont}\\
\includegraphics[scale = 0.45, clip = true, trim = .22in 0 1.23in 0.4in]{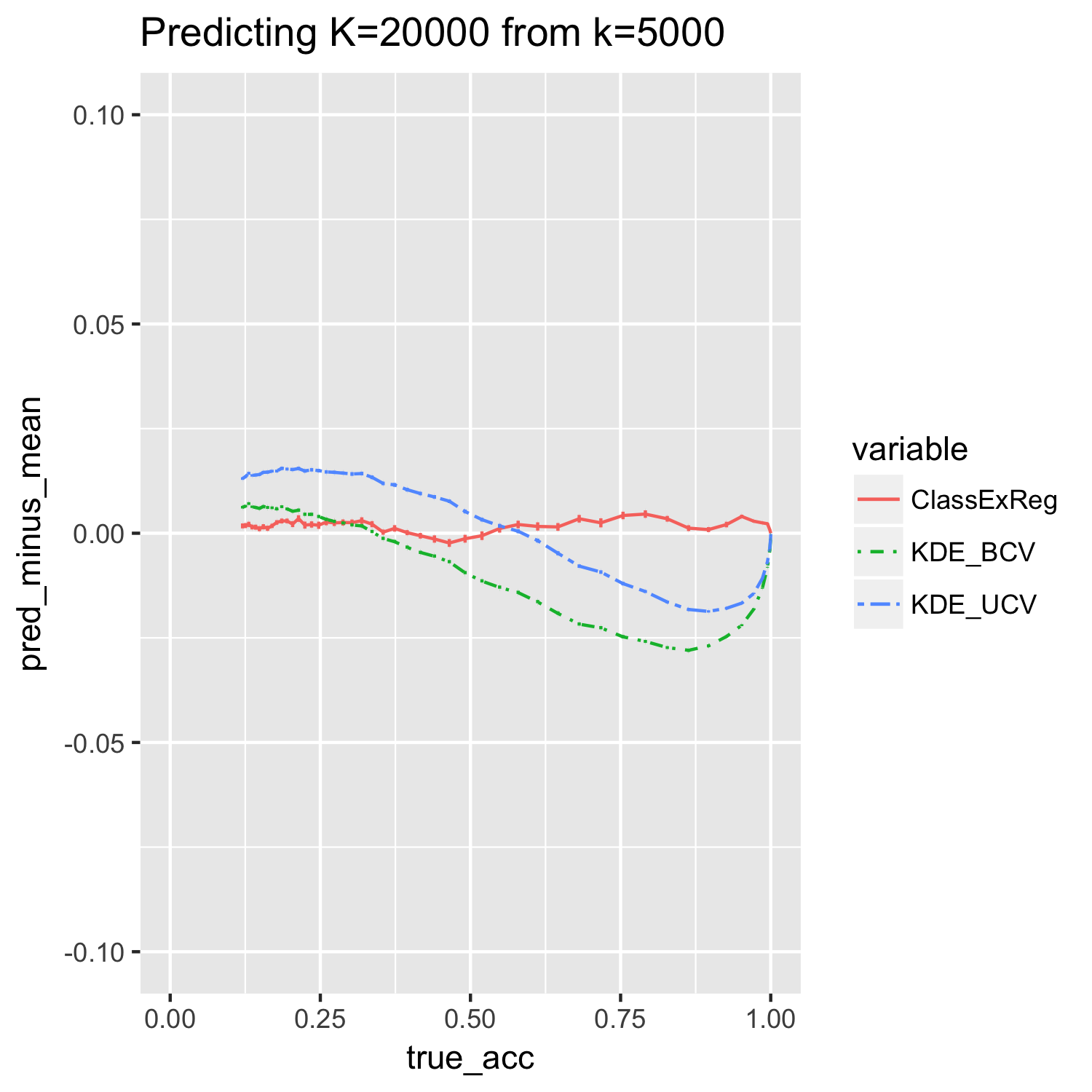} &
\includegraphics[scale = 0.45, clip = true, trim = .3in 0 1.23in 0.4in]{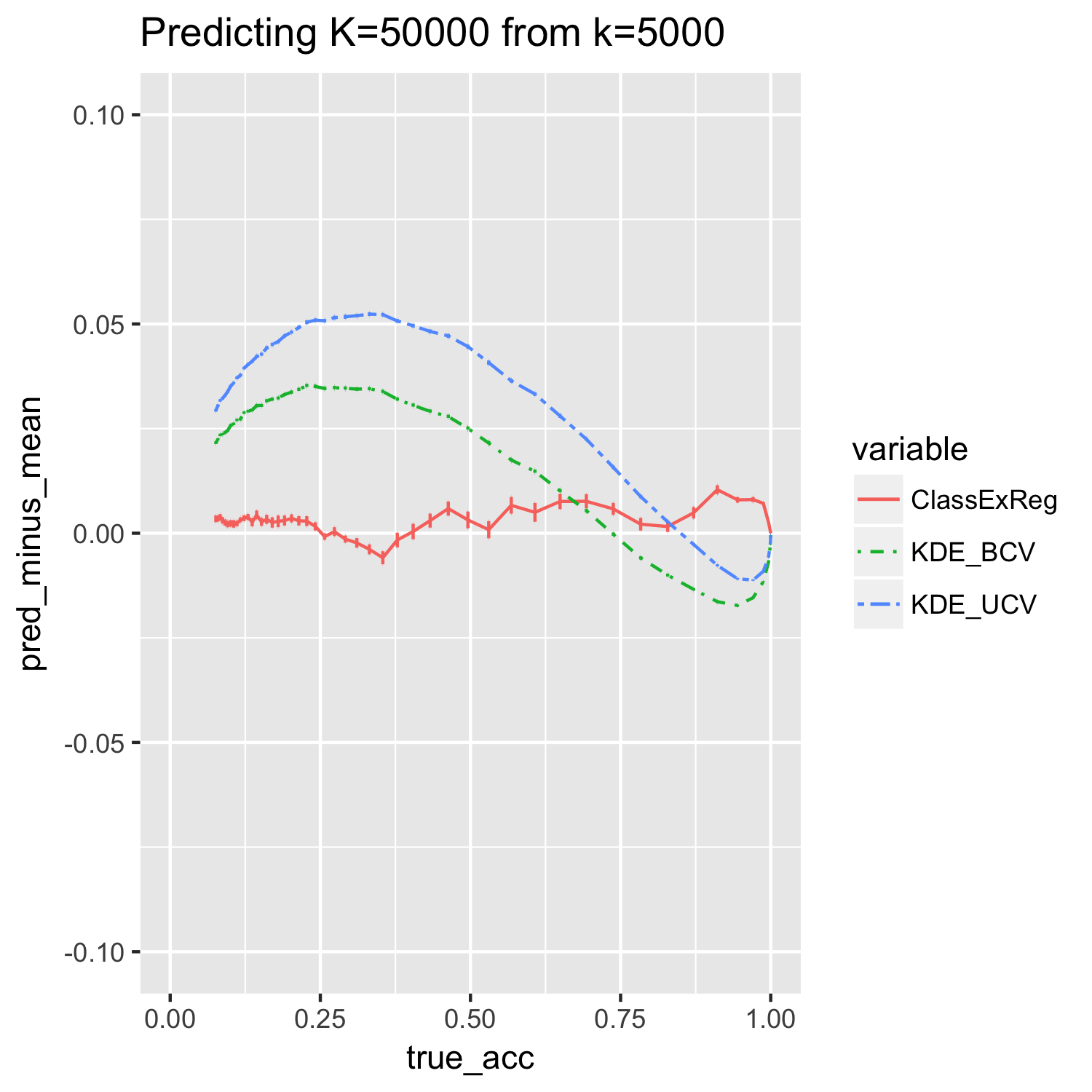} &
\includegraphics[scale = 0.45, clip = true, trim = .3in 0 0.00in 0.4in]{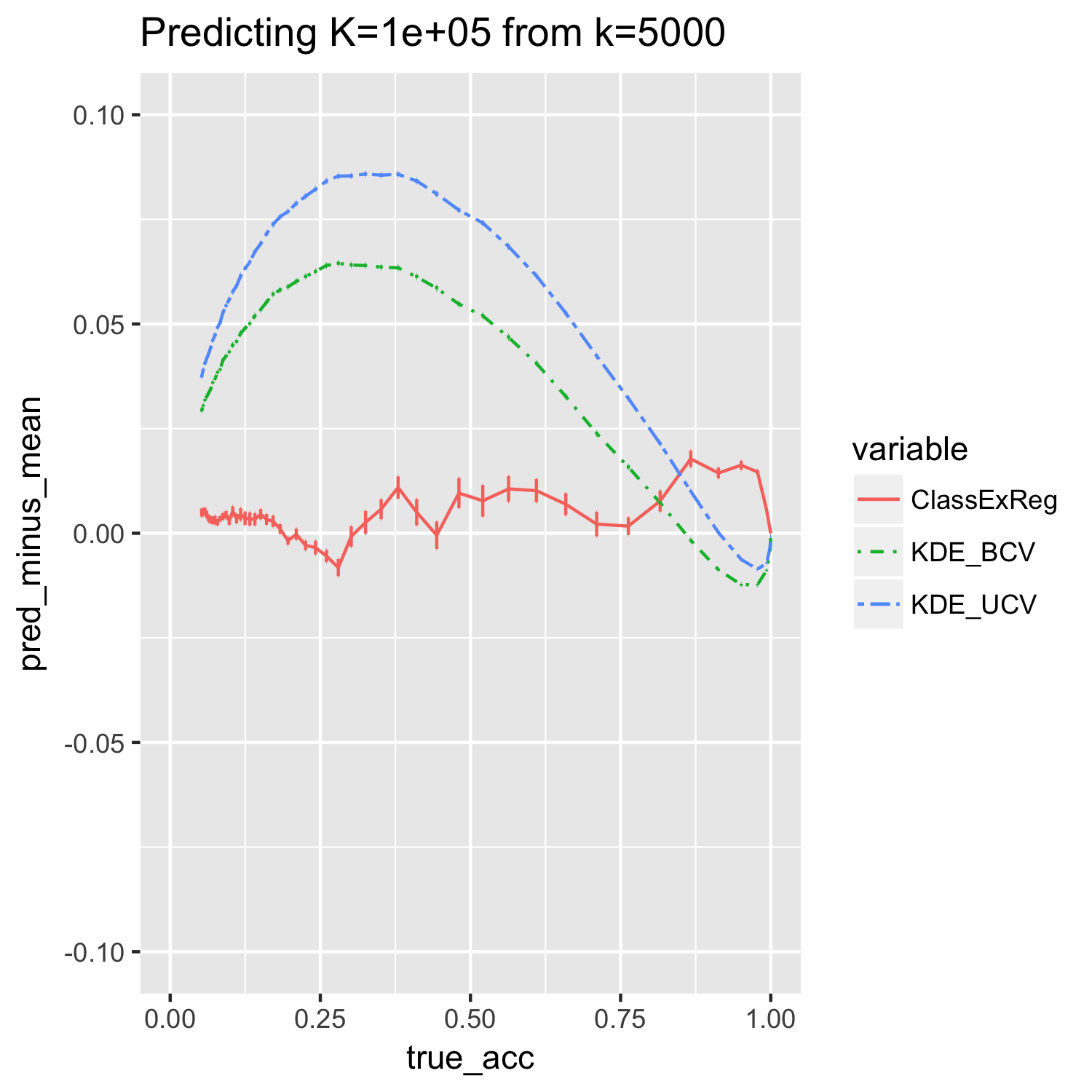}\\ 
\end{tabular}
\caption{\textbf{Simulation results (biases):} Simulation study
  consisting of multivariate Gaussian $Y$ with nearest neighbor
  classifier.  Bias (mean predicted minus true accuracy) vs true $k_2$-class accuracy
  for ClassExReg with radial basis (\textsf{ClassExReg}), KDE-based methods with biased cross-validation (\textsf{KDE\_BCV}) and unbiased cross-validation (\textsf{KDE\_UCV}).}
\label{fig:sim_study_bias}
\end{figure}

\section{Experimental Evaluation}\label{sec:extrapolation_example}

We demonstrate the extrapolation of average accuracy in two data examples:
(i) predicting the
accuracy of a face recognition on a large set of labels from the
system's accuracy on a smaller subset, and (ii) extrapolating the performance of various classifiers on an optical character recognition (OCR) problem in the Telugu script, which has over 400 glyphs.

The face-recognition example takes data from the ``Labeled Faces in the Wild'' data set (\cite{LFWTech}), where we selected the 1672 individuals with at least 2 face photos.  We form a
data set consisting of photo-label pairs $(x_j^{(i)}, y^{(i)})$
for $i = 1,\hdots, 1672$ and $j = 1,2$ by randomly selecting 2 face
photos for each individual. 
We used the OpenFace (\cite{amos2016openface}) embedding for feature
extraction.\footnote{For each photo $x$, a 128-dimensional feature vector
$g(x)$ is obtained as follows.  The computer vision library DLLib is
used to detect landmarks in $x$, and to apply a nonlinear
transformation to align $x$ to a template.  The aligned photograph is
then downsampled to a $96 \times 96$ image. The downsampled image is
fed into a pre-trained deep convolutional neural network to obtain the
128-dimensional feature vector $g(x)$. More details are found in
\cite{amos2016openface}.}
In order to identify a new photo $x^*$, we obtain the feature
vector $g(x^*)$ from the OpenFace network, and guess the label $\hat{y}$
with the minimal Euclidean distance between $g(y^{(i)})$ and $g(x^*)$,
which implies a score function
\[
M_{y^{(i)}}(x^*) = -||g(x_1^{(i)}) - g(x^*)||^2.
\]
In this way, we can compute the test accuracy on all 1672 classes, $\text{TA}_{1672}$, but we also subsample $k_1 = \{100,200,400\}$ classes in order to extrapolate from $k_1$ to 1672 classes.

\begin{figure}[t]
\centering
\begin{tabular}{|c|ccc|c|}
\hline
Label & & Training & & Test\\ \hline
$y^{(1)}$=Amelia & 
  $x_1^{(1)} = $\includegraphics[scale = 0.2]{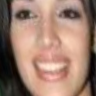} &  
  $x_2^{(1)} = $\includegraphics[scale = 0.2]{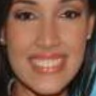} &  
  $x_3^{(1)} = $\includegraphics[scale = 0.2]{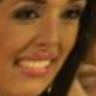} &  
  $x_*^{(1)} = $\includegraphics[scale = 0.2]{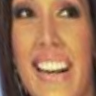} \\ \hline
$y^{(2)}$=Jean-Pierre & 
  $x_1^{(2)} = $\includegraphics[scale = 0.2]{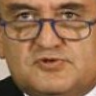} &  
  $x_2^{(2)} = $\includegraphics[scale = 0.2]{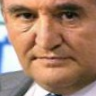} &  
  $x_3^{(2)} = $\includegraphics[scale = 0.2]{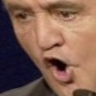} &  
  $x_*^{(2)} = $\includegraphics[scale = 0.2]{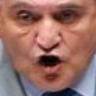} \\ \hline
$y^{(3)}$=Liza & 
  $x_1^{(3)} = $\includegraphics[scale = 0.2]{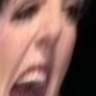} &  
  $x_2^{(3)} = $\includegraphics[scale = 0.2]{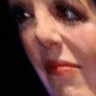} &  
  $x_3^{(3)} = $\includegraphics[scale = 0.2]{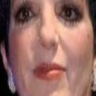} &  
  $x_4^{(3)} = $\includegraphics[scale = 0.2]{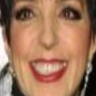} \\ \hline
$y^{(4)}$=Patricia & 
  $x_1^{(4)} = $\includegraphics[scale = 0.2]{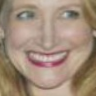} &  
  $x_2^{(4)} = $\includegraphics[scale = 0.2]{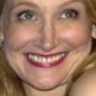} &  
  $x_3^{(4)} = $\includegraphics[scale = 0.2]{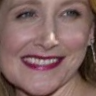} &  
  $x_4^{(4)} = $\includegraphics[scale = 0.2]{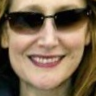} \\ \hline
\end{tabular}
\includegraphics[scale=0.3]{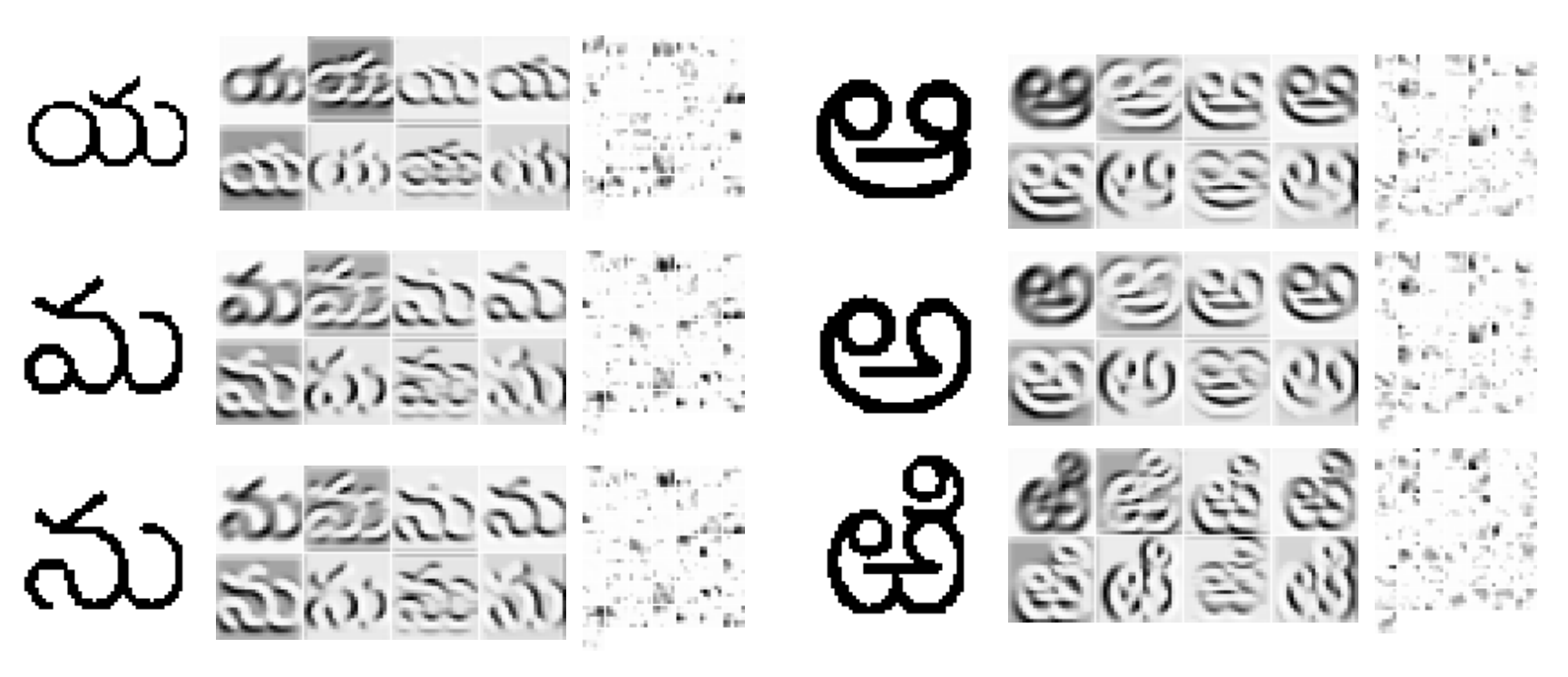}
\caption{\textbf{Face recognition setup (top):} Examples of labels and features from the \emph{Labeled Faces in the Wild} data set. \textbf{Telugu OCR (bottom)}: exemplars from six of the glyph classes, along with intermediate features and final transformations from the deep convolutional network.}
\label{fig:face_rec}
\end{figure}

In the Telugu optical character recognition example
(\cite{achanta2015telugu}), we consider
the use of three different classifiers: logistic
regression, linear support-vector machine (SVM), and a
deep convolutional neural network.\footnote{The network architecture is
  as follows: {\tt
    48x48-4C3-MP2-6C3-8C3-MP2-32C3-50C3-MP2-200C3-SM.}} 
The full data consists of 400 classes with 50 training and 50 test observations for each class. We create a nested hierarchy of subsampled data sets consisting of (i) a subset of 100 classes uniformly sampled without replacement from the 400 classes, and (ii) a subset consisting of 20 classes uniformly sampled without replacement from the size-100 subsample.   We therefore study three different prediction extrapolation problems:
\begin{enumerate}
\item Predicting the accuracy on $k_2 = 100$ classes from $k_1 = 20$ classes, comparing the predicted accuracy to the test accuracy of the classifier on the 100-class subsample as ground truth.
\item Same as (1), but setting $k_2 = 400$ and $k_1 = 20$, and using the full data set for the ground truth.
\item Same as (2), but setting $k_2 = 400$ and $k_1 = 100$.
\end{enumerate}
Note that unlike in the case of the face recognition example,
here the assumption of marginal classification is satisfied for none of
the classifiers.  
We compare the result of our model to the ground
truth obtained by using the full data set.

\subsection{Results}

The extrapolation results for the face recognition problem can be seen in Figure
\ref{fig:lfw_extrapolation2}, which plots the extrapolated accuracy
curves for each method for 100 different subsamples of size $k_1$.  As
can be seen, for all three methods, the variances decrease rapidly as
$k_1$ increases. 

The root-mean-square errors between at $k_2=1672$ can be seen in Table
\ref{tab:lfw_accuracy}.  KDE-BCV achieves the best extrapolation for
all three cases $k_1= \{100,200,400\}$ with KDE-UCV consistently
achieving second place.  These results differ from the ranking of the
RMSEs for the analagous simulation when predicting $k_2 = 2000$ from
$k_1 = 500$ for accuracies around 0.45: in the first row and second
column of Figure \ref{fig:sim_study}, where the true accuracy is 0.43
(from setting $\sigma^2=0.2$), the lowest RMSE belongs to KDE-UCV
(RMSE=$0.0361 \pm 0.001$), followed closely by ClassExReg (RMSE=$0.0372
\pm 0.002$), and KDE-BCV (RMSE=$0.0635\pm0.001$) having the highest
RMSE.  These discrepancies could be explained by differences between
the data distributions between the simulation and the face recognition
example, and also by the fact that we only have access to the $k_2 =
1672$-class ground truth for the real data example.

\begin{figure}[t]
\centering
\begin{tabular}{cccc}
&
\begin{myfont}ClassExReg\end{myfont} & 
\begin{myfont}KDE-BCV\end{myfont} &
\begin{myfont}KDE-UCV\end{myfont}\\
\begin{myfont}$k_1 = 100$\end{myfont} & 
\includegraphics[scale = 0.2, clip = true, trim = 0 0 0 0.6in, valign=c]{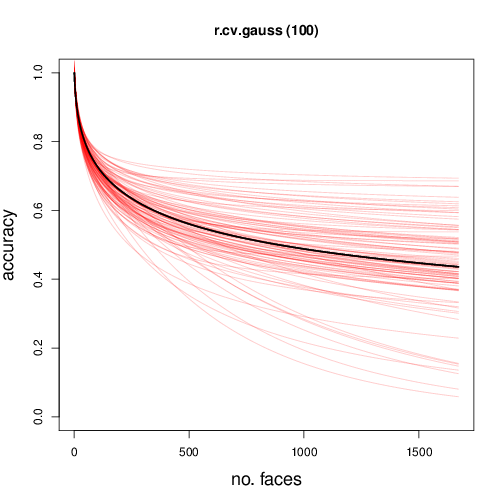} &
\includegraphics[scale = 0.2, clip = true, trim = 0 0 0 0.6in, valign=c]{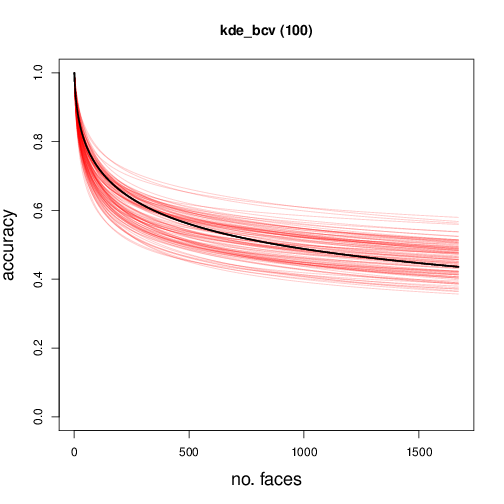} &
\includegraphics[scale = 0.2, clip = true, trim = 0 0 0 0.6in, valign=c]{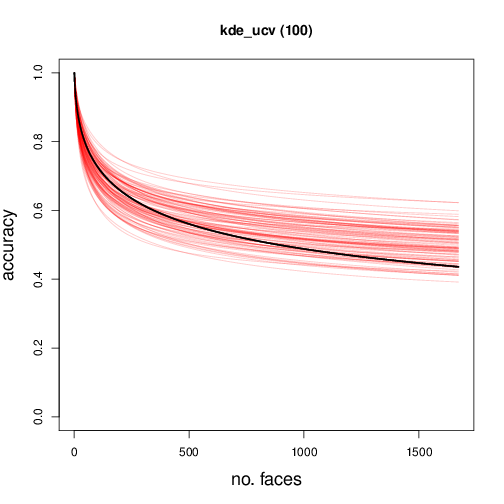} \\
\begin{myfont}$k_1 = 200$\end{myfont} & 
\includegraphics[scale = 0.2, clip = true, trim = 0 0 0 0.6in, valign=c]{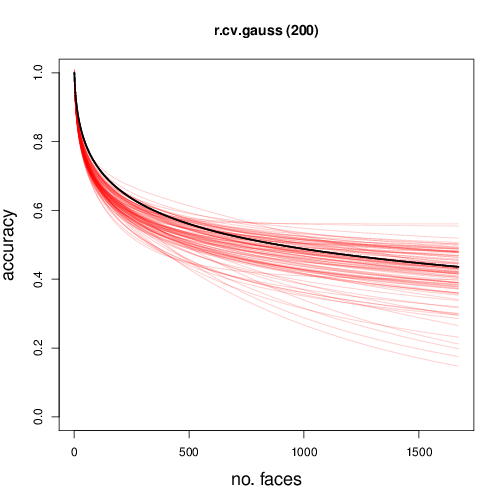} &
\includegraphics[scale = 0.2, clip = true, trim = 0 0 0 0.6in, valign=c]{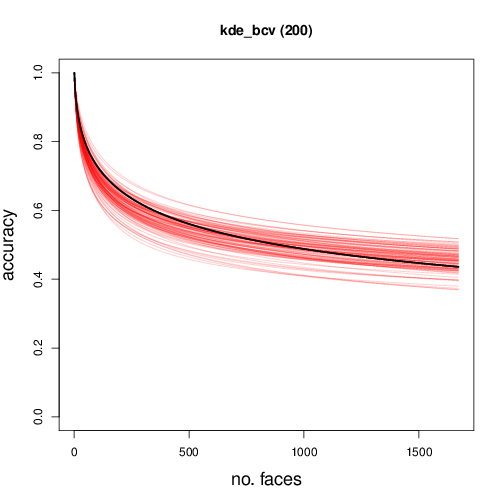} &
\includegraphics[scale = 0.2, clip = true, trim = 0 0 0 0.6in, valign=c]{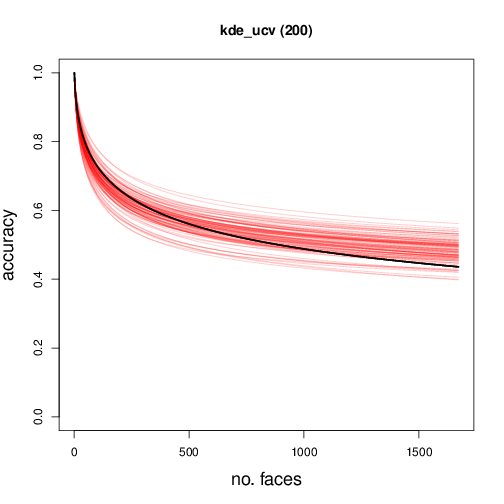} \\
\begin{myfont}$k_1 = 400$\end{myfont} & 
\includegraphics[scale = 0.2, clip = true, trim = 0 0 0 0.6in, valign=c]{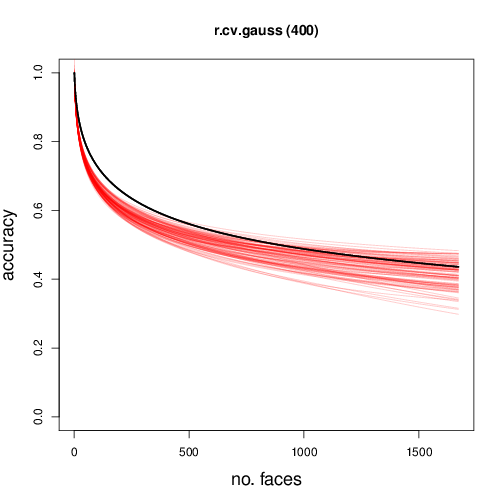} &
\includegraphics[scale = 0.2, clip = true, trim = 0 0 0 0.6in, valign=c]{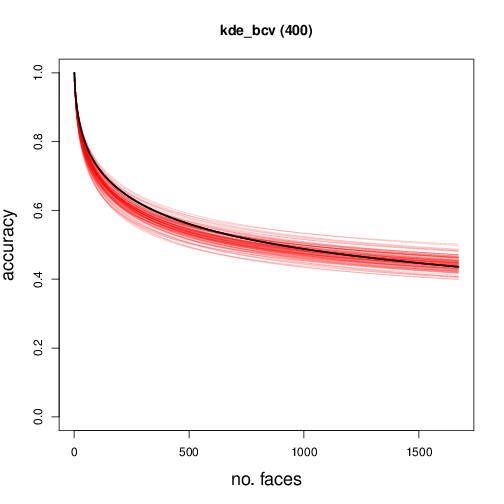} &
\includegraphics[scale = 0.2, clip = true, trim = 0 0 0 0.6in, valign=c]{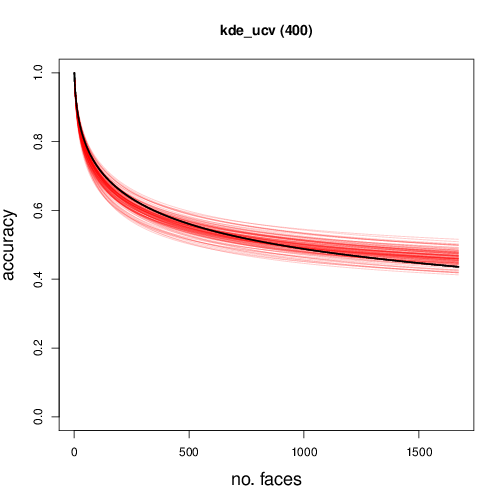} \\
\end{tabular}
\caption{\textbf{Predicted accuracy curves for face-recognition example}: The
  plots show predicted accuracies. Each red curve represents the predicted accuracies using
  a single subsample of size $k_1$. The black curve shows the average test accuracy obtained from the full data set. }
\label{fig:lfw_extrapolation2}
\end{figure}

\begin{table}[t]
\centering
\begin{tabular}{c||c|c|c|c}
\hline
$k_1$ & ClassExReg & KDE-BCV & KDE-UCV \\\hline
100 & 0.113 (0.002) & \textbf{0.053} (0.001) & 0.082 (0.001) \\\hline
200 & 0.058 (0.002)& \textbf{0.037} (0.001)  & 0.057 (0.001) \\ \hline
400 & 0.050 (0.001) & \textbf{0.024} (0.001)& 0.035 (0.001) \\\hline
\end{tabular}
\caption{\textbf{Face-recognition extrapolation RMSEs}: RMSE (se) on predicting $\text{TA}_{1672}$ from $k_1$ classes}\label{tab:lfw_accuracy}
\end{table}

The results for Telugu OCR classification are displayed in Table \ref{tab:tel_accuracy}.  If we rank the three extrapolation methods in terms of distance to the ground truth accuracy, we see a consistent pattern of rankings between the 20-to-100 extrapolation and the 100-to-400 extrapolation.  As we remarked in the simulation, the difficulty of extrapolation appears to be primarily sensitive to the extrapolation ratio $\frac{k_2}{k_1}$, which are similar (5 versus 4) in the 20-to-100 and 100-to-400 problems.  In both settings, ClassExReg comes closest to the ground truth for the Deep CNN and the SVM, but KDE-BCV comes closest to ground truth for the Logistic regression.  However, even for logistic regression, ClassExReg does better or comparably to KDE-UCV.

In the 20-to-400 extrapolation, which has the highest extrapolation ratio ($\frac{k_2}{k_1} = 20$), none of the three extrapolation methods performs consistently well for all three classifiers.  It could be the case that the variability is a dominating effect given the small training set, making it difficult to compare extrapolation methods using the 20-to-400 extrapolation task.

Unlike in the face recognition example, we did not resample training classes here, because that would require retraining all of the classifiers--which would be prohibitively time-consuming for the Deep CNN.  Thus, we cannot comment on the robustness of the comparisons from this example, though it is likely that we would obtain different rankings under a new resampling of the training classes.

\begin{table}[t]
\centering
\begin{tabular}{c|c||c|c||c|c|c}
$k_1$ & $k_2$ & Classifier & True & ClassExReg & KDE-BCV & KDE-UCV \\ \hline
 20 & 100 & Deep CNN & 0.9908 & \textbf{0.9905} & 0.7138 & 0.6507 \\ 
    &     & Logistic & 0.8490 & 0.8980 & \textbf{0.8414} & 0.8161 \\
    &     & SVM      & 0.7582 & \textbf{0.8192} & 0.6544 & 0.5771 \\ \hline
 20 & 400 & Deep CNN & 0.9860 & \textbf{0.9614} & 0.4903 & 0.3863 \\
    &     & Logistic & 0.7107 & 0.8824 & 0.7467 & \textbf{0.7015} \\
    &     & SVM      & 0.5452 & 0.6725 & \textbf{0.5163} & 0.4070 \\ \hline
100 & 400 & Deep CNN & 0.9860 & \textbf{0.9837}& 0.8910 & 0.8625  \\ 
    &     & Logistic & 0.7107 & 0.7214& \textbf{0.7089} & 0.6776  \\
    &     & SVM      & 0.5452 & \textbf{0.5969}& 0.4369 & 0.3528  \\ 
\hline
\end{tabular}
\caption{\textbf{Telugu OCR extrapolated accuracies}: Extrapolating from $k_1$ to $k_2$ classes in Telugu OCR for three different classifiers: logistic regression, support vector machine, and deep convolutional network}\label{tab:tel_accuracy}
\end{table}

\section{Discussion}
\label{sec:discussion}
In this work, we suggest treating the class set in a classification
task as random, in order to extrapolate classification performance on
a small task to the expected performance on a larger unobserved task.
We show that average generalized accuracy decreases with increased
label set size like the $(k-1)$th moment of a distribution function.
Furthermore, we introduce an algorithm for estimating this underlying
distribution, that allows efficient computation of higher order
moments. Code for the methods and the simulations can be found in \url{https://github.com/snarles/ClassEx}.

There are many choices and simplifying assumptions used in
the description of the method.  In this discussion, we discuss these
decisions and map some alternative models or strategies for future
work.


Since our analysis is currently restricted to i.i.d. sampling of classes, 
one direction for future work is to generalize the sampling mechanism,
such as to cluster sampling.  More broadly, the assumption that the labels in $\mathcal{S}_k$ are a
random sample from a homogeneous distribution $\pi$ may be inappropriate.  Many
natural classification problems arise from hierarchically partitioning
a space of instances into a set of labels.  Therefore, rather than
modeling $\mathcal{S}_k$ as a random sample, it may be more suitable
to model it as a random hierarchical partition of $\mathcal{Y}$, such as one arising from an optional P{\'o}lya tree process
\citep{wong2010optional}.
Finally, note that we assume no knowledge about the new class-set
except for its size. Better accuracy might be achieved if some partial information 
is known.


Also, we only discussed extrapolating the
classification accuracy--or equivalently, the risk for the zero-one
cost function.  However, it is possible to extend our analysis to risk
functions with arbitrary cost functions, which is the subject of
forthcoming work.


A third direction of exploration is impose additional modeling assumptions for specific problems.  ClassExReg adopts a non-parametric model of the discriminability function $D(u)$, in the sense that $D(u)$ was defined via a spline expansion.   However, an alternative approach is to assume a parametric family for $D(u)$ defined by a small number of parameters.  In forthcoming work, we show that under certain limiting conditions, $D(u)$ is well-described by a two-parameter family.  This substantially increases the efficiency of estimation in cases where the limiting conditions are well-approximated.

\section*{Acknowledgments}

We thank Jonathan Taylor, Trevor Hastie, John Duchi, Steve
  Mussmann, Qingyun Sun, Robert Tibshirani, Patrick McClure, and Gal Elidan for useful discussion.  CZ
  is supported by an NSF graduate research fellowship, and would also
  like to thank the European Research Council under the ERC grant agreement $\text{n}^\circ$[PSARPS-294519]  for travel support.

\bibliography{example}

\end{document}